\documentclass{article}

\PassOptionsToPackage{numbers, sort}{natbib}

\usepackage[preprint]{neurips_2026}

\makeatletter
\renewcommand{\@toptitlebar}{%
  \vskip 0.25in
  \vskip -\parskip%
}
\renewcommand{\@bottomtitlebar}{%
  \vskip 0.29in
  \vskip -\parskip
  \vskip 0.09in%
}
\renewcommand{\@notice}{}
\makeatother

\usepackage[utf8]{inputenc} 
\usepackage[T1]{fontenc}    
\usepackage[
        breaklinks,
        colorlinks,
        ]{hyperref}         
\usepackage{booktabs}       
\usepackage{amsfonts}       
\usepackage{amssymb}        
\usepackage{nicefrac}       
\usepackage{microtype}      
\usepackage{graphicx}
\usepackage{amsmath}
\usepackage{amsthm}
\usepackage[table,xcdraw,dvipsnames]{xcolor}
\usepackage{algorithm}
\usepackage{algorithmicx}
\usepackage{algpseudocode}
\usepackage{multirow}
\usepackage{subcaption} 
\usepackage{makecell}
\usepackage{wrapfig}
\usepackage{enumitem}
\usepackage[most]{tcolorbox}
\usepackage{eso-pic}
\usepackage{tikz}
\definecolor{c1}{RGB}{70,120,180}

\theoremstyle{plain}
\newtheorem{theorem}{Theorem}

\newtheorem{proposition}[theorem]{Proposition}
\theoremstyle{definition}

\theoremstyle{remark}

\usepackage{mathpazo}

\title{Self-Induced Outcome Potential: Turn-Level Credit Assignment for Agents without Verifiers}

\author{%
Senkang Hu$^{1,2,3,}\thanks{Part of this work was completed while the author was a visiting PhD student at the University of Oxford.}$\ \ , Yong Dai$^{4}$, Xudong Han$^{5}$, Zhengru Fang$^{1,2}$, \textbf{Yuzhi Zhao}$^6$,\\ \textbf{Sam Tak Wu Kwong}$^7$,  \textbf{Yuguang Fang}$^{1,2}$ \\
  $^1$Hong Kong JC STEM Lab of Smart City,
  $^2$City University of Hong Kong,\\
  $^3$University of Oxford,
  $^4$Fudan University,
  $^5$University of Sussex,\\
  $^6$Huazhong University of Science and Technology, $^7$Lingnan University\\
  \texttt{senkang.forest@my.cityu.edu.hk}
}

\begin{document}
\maketitle

\vspace{-1.5em}

\begin{abstract}
Long-horizon LLM agents succeed or fail through intermediate information-gathering turns, yet training signals are usually observed only at the final answer, because process-level rewards require extensive high-quality human annotation. Existing turn-level shaping methods address this supervision mismatch by rewarding turns that increase the likelihood of a gold answer, but they require answer supervision or stable task-specific verifiers that are often unavailable in agentic settings. Conversely, recent label-free reinforcement learning (RL) methods extract useful self-signals from output distributions, but mostly at the answer or trajectory level and therefore cannot attribute value to intermediate turns. To address these issues, we propose \emph{Self-Induced Outcome Potential} (SIOP), a framework that treats semantic clusters of final answers as latent future outcome states for potential-based turn-level credit assignment. For each query, we sample multiple rollouts, cluster their final answers into semantic outcome modes, and construct a reliability-aware target distribution over these latent states. We first formulate a distributional potential that rewards turns for increasing posterior support for reliable future outcome states, and then instantiate it with a tractable cluster-level approximation. Terminal outcome mass is added only at the final answer turn and propagated backward through turn-level advantages. The resulting objective generalizes information-potential shaping from gold-answer supervision to settings without task-specific gold verifiers while avoiding the broadcasted rollout-level advantages used by standard GRPO.
We formalize the framework, establish its reduction to supervised turn-level shaping in the gold-answer limit, and show that SIOP consistently improves over verifier-free outcome-level baselines on seven search-augmented agentic reasoning benchmarks while approaching the average performance of a gold-supervised outcome baseline. 
Code is available at \url{https://github.com/dl-m9/SIOP.git}.
\end{abstract}

\section{Introduction}

Recent reasoning models such as OpenAI o1 \cite{openai2024o1systemcard} and DeepSeek-R1 \cite{guoDeepSeekR1IncentivizesReasoning2025} have made a broader shift visible: large language models (LLMs) are increasingly trained to solve problems through extended reasoning rather than immediate answer generation. 
Agentic settings extend this paradigm from internal reasoning traces to external interaction. Instead of answering from a fixed prompt, an agent reasons over a partial state, retrieves evidence, calls tools, observes returned information, and decides whether further interaction is needed. This turns model behavior from static generation into sequential decision making over reasoning, tool use, and external observations. 

The shift is visible across recent systems and training recipes. Agentic retrieval-augmented generation (RAG) systems surveyed by Li \textit{et al.}\ \cite{li_towards_2025} frame retrieval as adaptive interaction rather than static preprocessing. Kimi K2 \cite{kimiteam2026kimik2openagentic} uses large-scale agentic data synthesis and joint reinforcement learning through real and synthetic environments. ReSearch \cite{chen2025researchlearningreasonsearch} and Search-R1 \cite{jin2025searchr} train search-augmented models to interleave reasoning and retrieval. ToolRL \cite{qian_toolrl_2025} and SimpleTIR \cite{xue2026simpletir} study tool-use RL and tool-integrated reasoning under multi-turn interaction. These works show that agentic RL has become a practical path for improving interactive reasoning.

However, many of these training recipes still rely on outcome-level feedback: the policy receives rewards mainly from final-answer correctness, task success, or terminal score after the full trajectory is complete. Such outcome-only reinforcement learning is not well suited to interactive agents. In long trajectories, a single terminal reward cannot reliably identify whether a specific thought, tool call, or observation improved the agent's future prospects. For example, ReasonRAG \cite{zhang2025processvsoutcomereward} and InForage \cite{qian2025scentknowledgeoptimizingsearchenhanced} report that outcome-supervised agentic RL suffers from low exploration efficiency, gradient conflict, and sparse rewards. These studies indicate that terminal outcome supervision is not enough to guide the agent's long-horizon behavior.

To address this problem, one line of work proposes turn-level reward shaping. For example, methods such as IGPO \cite{wang2025igpo}, $\Delta$Belief-RL \cite{auzina2026intrinsiccreditassignmentlong}, TIPS \cite{xie2026tips}, and InfoPO \cite{kong2026infopo} measure whether an interaction turn increases the model's support for a desired target, such as the gold answer or the correct next action. These methods show that information-potential shaping can substantially improve multi-turn optimization.
However, these methods inherit a strong assumption: a rewardable target must already be available. In many agentic tasks, there is no unique gold answer or stable task-specific verifier against which a turn-level potential can be computed.

Another line of work removes this requirement by extracting training signals directly from model outputs. EMPO \cite{zhang2025right} clusters semantically equivalent responses and optimizes semantic entropy without external supervision, while TTRL \cite{zuo_ttrl_2025} performs test-time reinforcement learning by selecting the majority answer from repeated samples as a self-generated pseudo-target. Related verifier-free RL methods, including RLPR \cite{yu_rlpr_2025}, NOVER \cite{liu_nover_2025}, and DRO \cite{xu_direct_2025}, similarly show that model-internal confidence or answer distributions can support learning beyond tasks with explicit verifiers. Yet these methods remain mostly outcome-level: they tell the model which final responses appear more reliable, but not which intermediate turns actually moved the trajectory toward those outcomes.

Building on these gaps, this paper studies how to lift unlabeled outcome-level reliability signals into turn-level credit assignment. Our key abstraction is to cluster sampled final answers into \emph{semantic outcome modes}, where each mode is a query-specific group of answers that express the same underlying meaning, and then treat these modes as \emph{latent future states}. Even without gold answers or stable task-specific verifiers, a batch of rollouts for the same query induces a structured distribution over these latent states. Some modes are frequent, consistent, evidence-supported, or otherwise more trustworthy than others, while others reflect low-consensus noise and should be down-weighted. We then define a potential function over this self-induced latent state space and reward each turn by how much it increases the model's support for reliable future outcome states. The resulting \emph{self-induced outcome potential} generalizes information-potential shaping from gold-answer supervision to verifier-free settings without task-specific gold verifiers.

Concretely, we sample multiple full rollouts for each query, cluster their final answers into semantic outcome modes, and assign soft reliability weights to these modes. This yields a distributional target over reliable latent futures rather than a hard pseudo-label. Because estimating the full mode distribution at every turn is expensive, we instantiate this principle in practice with a multi-reference cluster-level approximation that conditions each rollout on its realized semantic outcome mode while scoring a small set of reference answers rather than a single prototype. Terminal outcome mass is added only at the final answer turn and propagated backward through turn-level advantages, so different turns of the same rollout receive different credit. This design preserves the key intuition behind potential-based shaping, namely rewarding the intermediate interactions that move the trajectory toward reliable future states, while avoiding the assumption that a gold answer is known during training.
Our contributions are summarized as follows.
\begin{enumerate}
    \item We formulate \emph{verifier-free turn-level credit assignment} as potential-based learning over self-induced semantic outcome modes, where clustered final answers serve as latent future states rather than fixed gold targets or trajectory-level pseudo-labels.
    \item We define a \emph{self-induced outcome potential} (SIOP) over these latent states, calibrating empirical outcome modes by reliability signals and converting prefix-to-prefix changes in cluster support into turn-level process rewards.
    \item We construct turn-specific advantages for long-horizon agents, assigning credit only to trainable tokens within each turn rather than broadcasting a trajectory-level score.
    \item We provide theoretical characterizations showing how the proposed reward reduces to gold-answer shaping in the supervised limit and why it measures step-wise marginal support gain rather than broadcasting a trajectory score. We also present an empirical study on seven search-QA benchmarks showing that our method outperforms outcome-only verifier-free baselines and approaches a gold-supervised outcome RL baseline.
\end{enumerate}

\section{Problem Setup}
\label{sec:problem}

We study an agentic setting where an agent receives a query $q$ and interacts with an environment over multiple turns before producing a final answer. At turn $t$, the agent emits reasoning tokens, optionally issues a tool call, and receives an observation. We treat this as a single interaction segment
\begin{equation}
s_t = (\text{thought}_t, \text{tool}_t, \text{obs}_t),
\end{equation}
so that a complete rollout can be written directly as
\begin{equation}
\tau = (q, s_1, \dots, s_T, y).
\end{equation}
For convenience, we write $\tau_{\le t} = (q, s_1, \dots, s_t)$ for the rollout truncated after turn $t$.
For each query, we sample $K$ rollouts from the current policy $\pi_\theta$:
\begin{equation}
\tau^{(k)} \sim \pi_\theta(\cdot \mid q), \qquad k=1,\dots,K,
\end{equation}
where the $k$-th rollout can be written as
\begin{equation}
\tau^{(k)} = (q, s_1^{(k)}, \dots, s_{T_k}^{(k)}, y^{(k)}).
\end{equation}

\textbf{Supervision mismatch.}
Supervised turn-level shaping methods such as $\Delta$Belief-RL \cite{auzina2026intrinsiccreditassignmentlong}, IGPO \cite{wang2025igpo}, and TIPS \cite{xie2026tips} assume access to a gold answer $y^\star$ or an equivalent target representation. In our setting, training does not observe a unique correct answer, a stable answer set, or a task-specific correctness verifier. The task-specific information available during learning consists of sampled rollouts from the current policy.

\textbf{Verifier-free terminology.}
We use \emph{verifier-free} in the task-specific sense: training requires no gold answers, task-specific correctness verifiers, or hand-designed rewards. An off-the-shelf natural language inference (NLI) model is used only as a task-agnostic semantic primitive for clustering and evidence scoring, without access to benchmark labels or task-specific answer checkers.

\textbf{Semantic outcome modes.}
Let $\mathcal{Y}_q = \{y^{(1)}, \dots, y^{(K)}\}$ denote the set of answers sampled for query $q$. Following semantic entropy by Kuhn et al.\ \cite{kuhn2023semantic}, we cluster these answers by bidirectional semantic entailment under the query context. Writing $y \Rightarrow y'$ for ``$y$ semantically entails $y'$ given $q$,'' as judged by an NLI model applied to the query-contextualized pair, we assign $y$ and $y'$ to the same cluster iff $y \Rightarrow y'$ and $y' \Rightarrow y$. Applying this criterion to $\mathcal{Y}_q$ yields a partition
\begin{equation}
\mathcal{C}(q)=\{c_1,\dots,c_M\},
\end{equation}
where each cluster $c_m \subseteq \mathcal{Y}_q$ collects answers with the same meaning. We call each cluster a \emph{semantic outcome mode}. We write $c(k) \in \mathcal{C}(q)$ for the cluster containing the final answer $y^{(k)}$ of rollout $k$.

\textbf{Cluster statistics.}
For each cluster $c \in \mathcal{C}(q)$, let
\begin{equation}
S_c = \{k \in \{1,\dots,K\} \mid c(k)=c\}
\end{equation}
denote the set of sampled rollouts whose final answers fall into that semantic outcome mode. The corresponding empirical cluster mass is
\begin{equation}
m(c \mid q)=\frac{|S_c|}{K}.
\end{equation}
We use $q_\theta(c \mid q)$ to denote the training target distribution over semantic outcome modes. Section~\ref{sec:method} instantiates this distribution by calibrating the empirical cluster mass with reliability signals.

\textbf{Goal.}
Our objective is to construct a turn-level reward $r^{(k)}_{t,\mathrm{proc}}$ for each segment $s_t^{(k)}$ and convert these rewards into turn-specific learning signals for policy optimization. The next section defines the self-induced target distribution, outcome potential, and policy update used to compute these quantities.

\section{Methodology}
\label{sec:method}
\vspace{-.5em}

SIOP starts from a credit-assignment question: without a gold answer or task-specific verifier, what future state should an intermediate turn be rewarded for moving toward? We answer this by using semantic outcome modes as the future-state space for potential-based shaping. Figure~\ref{fig:siop_overview} shows the resulting pipeline. Appendix~\ref{app:notation} summarizes the notation used throughout this section.

\begin{figure}[t]
    \centering
    \includegraphics[width=\linewidth]{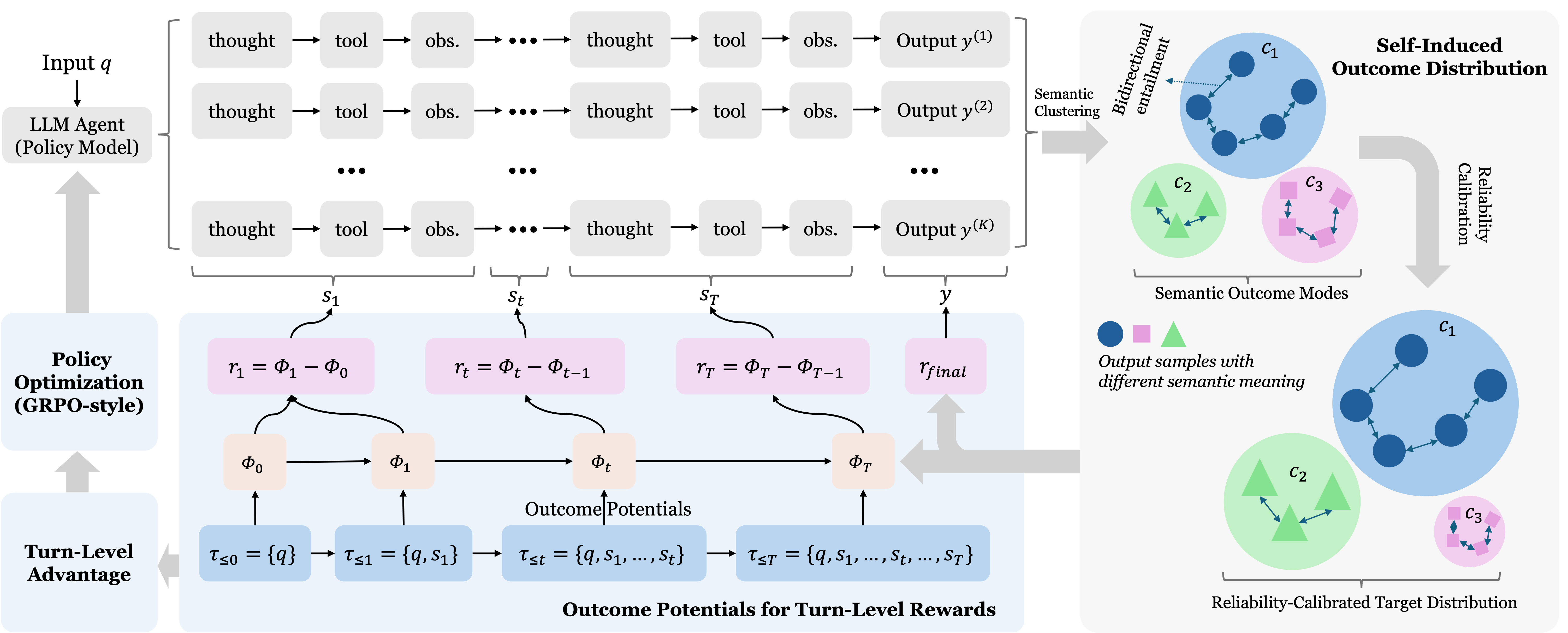}
    \caption{Overview of Self-Induced Outcome Potential (SIOP). For each query, the policy samples multiple agent rollouts, clusters their final answers into semantic outcome modes, and calibrates these modes into a reliability-weighted target distribution. SIOP then evaluates each prefix by its outcome potential, assigns turn rewards from potential differences, adds terminal outcome mass at the final answer turn, and updates the policy with turn-level advantages.}
    \label{fig:siop_overview}
    \vspace{-1em}
\end{figure}

\subsection{Self-Induced Outcome Distribution}

\noindent\textbf{Semantic outcome modes.}
Multiple rollouts for the same query often produce answers that differ in surface form but express the same semantic hypothesis. Treating each string as a separate target makes the reward sensitive to wording, while selecting only the majority answer collapses uncertainty too early. We therefore group final answers into semantic outcome modes and treat each mode as a latent future state.

Let $S_c = \{k \mid c(k)=c\}$ denote the rollout subset assigned to cluster $c$. We begin with the empirical semantic outcome distribution
\begin{equation}
m(c \mid q)=\frac{|S_c|}{K},
\end{equation}
and let $R(c) \subseteq c$ denote a small reference set selected from cluster $c$, with nonnegative weights $\{\alpha_y^{(c)}\}_{y \in R(c)}$ satisfying
\begin{equation}
\sum_{y \in R(c)} \alpha_y^{(c)} = 1.
\end{equation}
These reference answers provide a tractable semantic proxy for the full cluster while avoiding commitment to a single canonical wording.

\noindent\textbf{Reliability calibration.}
Empirical mode frequency is useful, but it is not a correctness guarantee. A wrong answer can become the majority when the base policy has a consistent bias or when retrieval returns misleading evidence. SIOP therefore does not directly optimize raw cluster frequency. It reweights the rollout-induced distribution using reliability evidence, so a frequent mode receives strong target mass only when it is also supported by the available semantic or evidential signal.

We then calibrate the empirical distribution with a reliability correction
\begin{equation}
u_\theta(c,q) = \eta \, r(c,q),
\end{equation}
where $\eta \geq 0$ controls the correction strength and $r(c,q)$ is a domain-appropriate reliability signal. The framework only requires that larger $r(c,q)$ indicate more reliable outcome modes.

For tool-using agents, we instantiate $r$ as \emph{evidence support}, namely the average degree to which tool observations along trajectories support the cluster's reference set:
\begin{equation}
r(c,q) = \frac{1}{|S_c|}\sum_{k \in S_c}
\frac{1}{T_k}\sum_{t=1}^{T_k}
\sum_{y \in R(c)} \alpha_y^{(c)}
s_{\mathrm{NLI}}\!\bigl(\mathrm{obs}_t^{(k)},\, y\bigr),
\end{equation}
where $s_{\mathrm{NLI}}(\mathrm{obs}, y)$ is the entailment confidence from an NLI model.
In reasoning-only settings without tool observations, $r(c,q)$ can instead be instantiated with cluster-level self-consistency or mean log-confidence over the same reference set.

The resulting self-induced target distribution is
\begin{equation}
q_\theta(c \mid q)=\frac{m(c \mid q)\exp(u_\theta(c,q))}{\sum_{c' \in \mathcal{C}(q)} m(c' \mid q)\exp(u_\theta(c',q))}.
\end{equation}
If $u_\theta(c,q)=0$ for all clusters, this reduces to the empirical semantic outcome distribution induced by the sampled rollouts. Hence the self-induced target distribution can be viewed as the model's own semantic answer distribution, reweighted by reliability signals before it is used for turn-level shaping.

\subsection{Semantic Outcome Potential for Turn-Level Rewards}

\noindent\textbf{Multi-reference cluster support.}
After forming reliable outcome modes, turn credit requires a prefix-conditioned support score for each mode. Because agentic answer spaces make exact mode probabilities impractical, we approximate mode support using a small reference set selected from the cluster.

The framework evaluates a turn against the distribution of reliable future outcome modes. For a truncated rollout $\tau_{\le t}$, the true posterior mass on semantic outcome mode $c$ is
\begin{equation}
p_\theta(c \mid \tau_{\le t})
= \sum_{y \in \mathcal{Y}(q)} 
p_\theta(y \mid \tau_{\le t})\,\mathbf{1}[y \in c].
\end{equation}
Because summing over the full answer space is intractable, we instantiate a tractable multi-reference surrogate for cluster support:
\begin{equation}
\tilde p_\theta(c \mid \tau_{\le t})
= \sum_{y \in R(c)} \alpha_y^{(c)} p_\theta(y \mid \tau_{\le t}),
\end{equation}
which scores a small set of reference answers rather than the full cluster. When $R(c)$ contains a single answer, this recovers the earlier single-prototype approximation as a special case. With multiple references, however, the reward is less sensitive to any one wording choice and better reflects support for the semantic mode as a set.

\textbf{Potential-based turn reward.}
Once reliable semantic modes and prefix-conditioned support are defined, SIOP assigns turn-level credit by measuring changes in a potential over the rollout's realized semantic outcome mode $c(k)$. A turn receives positive credit when it increases prefix support for this mode and negative credit when it decreases that support. The \emph{outcome potential} at turn $t$ is
\begin{equation}
\Phi_\theta(\tau_{\le t}^{(k)}, c(k))
= q_\theta(c(k) \mid q)\,
\log\bigl(\tilde p_\theta(c(k) \mid \tau_{\le t}^{(k)})\bigr),
\end{equation}
The potential combines a reliability-calibrated target mass with prefix-conditioned support. The target mass $q_\theta(c(k)\mid q)$ reflects the rollout frequency of the realized cluster after calibration by evidence support, while the log-support term measures how strongly the current prefix predicts that cluster. Thus potential differences are largest when a turn increases support for a cluster with high calibrated target mass.
The step-level process reward is:
\begin{equation}
\begin{aligned}
r^{(k)}_{t,\mathrm{proc}} &=
\Phi_\theta(\tau_{\le t}^{(k)}, c(k))
- \Phi_\theta(\tau_{\le t-1}^{(k)}, c(k)) \\
&= q_\theta(c(k) \mid q)\Bigl[
\log\bigl(\tilde p_\theta(c(k) \mid \tau_{\le t}^{(k)})\bigr)
- \log\bigl(\tilde p_\theta(c(k) \mid \tau_{\le t-1}^{(k)})\bigr)
\Bigr].
\end{aligned}
\end{equation}
This reward is positive when turn $t$ increases conditional support for the rollout's semantic outcome mode, and negative when it moves the trajectory away from that mode. All rollouts assigned to the same cluster share the same semantic reference set $R(c)$, so the reward operates at the cluster level rather than the individual answer level.

\begin{proposition}[Telescoping of the outcome potential]
\label{prop:telescoping}
For any rollout $\tau^{(k)}$, let $\tau_{\le 0}^{(k)}$ denote the initial query state before agent-generated turns. The cumulative process reward satisfies
\vspace{-0.5em}
\begin{equation}
\begin{aligned}
\sum_{t=1}^{T_k} r^{(k)}_{t,\mathrm{proc}}
&=
\Phi_\theta(\tau_{\le T_k}^{(k)}, c(k))
-\Phi_\theta(\tau_{\le 0}^{(k)}, c(k)) \\
&=
q_\theta(c(k) \mid q)\Bigl[
\log\bigl(\tilde p_\theta(c(k) \mid \tau_{\le T_k}^{(k)})\bigr)
- \log\bigl(\tilde p_\theta(c(k) \mid q)\bigr)
\Bigr].
\end{aligned}
\end{equation}
Hence the process reward is potential-based: cumulative credit depends only on the change in outcome potential between the initial query state and the final trajectory state.
\end{proposition}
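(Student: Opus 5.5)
The argument is a telescoping sum, so the plan is to substitute the definition of $r^{(k)}_{t,\mathrm{proc}}$ as a potential difference and cancel the intermediate terms. The one structural point to settle first is that the weight $q_\theta(c(k)\mid q)$ does not depend on $t$. It is computed once per query from the empirical cluster masses $m(c\mid q)$ and the reliability correction $u_\theta(c,q)$. Both are functions of the full batch of $K$ rollouts and the reference sets $R(c)$, not of the prefix index. Hence $\Phi_\theta(\cdot, c(k))$ is a fixed function of the prefix alone, namely a constant times $\log \tilde p_\theta(c(k)\mid\cdot)$, for the whole rollout. This is the property that makes the shaping potential-based in the sense of Ng et al.

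Next I would write
\begin{equation*}
\sum_{t=1}^{T_k} r^{(k)}_{t,\mathrm{proc}} = \sum_{t=1}^{T_k}\Bigl[\Phi_\theta(\tau^{(k)}_{\le t},c(k)) - \Phi_\theta(\tau^{(k)}_{\le t-1},c(k))\Bigr].
\end{equation*}
For each $1\le t\le T_k-1$, the term $\Phi_\theta(\tau^{(k)}_{\le t},c(k))$ appears once with a plus sign, at index $t$, and once with a minus sign, at index $t+1$. These cancel, which leaves $\Phi_\theta(\tau^{(k)}_{\le T_k},c(k)) - \Phi_\theta(\tau^{(k)}_{\le 0},c(k))$. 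If one wants full rigor, a one-line induction on $T_k$ does the same job. This gives the first equality.

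For the second equality, I would substitute the definition of $\Phi_\theta$ and factor out the common constant $q_\theta(c(k)\mid q)$. I would then use the convention $\tau^{(k)}_{\le 0}=(q)$, so that $\tilde p_\theta(c(k)\mid \tau^{(k)}_{\le 0}) = \tilde p_\theta(c(k)\mid q)$.

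The only genuine obstacle is well-definedness. Every logarithm must be finite, which requires $\tilde p_\theta(c(k)\mid\tau^{(k)}_{\le t})>0$ for all $t$. I would argue this from the model's softmax parameterization: every finite token sequence, and in particular each reference answer $y\in R(c)$, has strictly positive likelihood. The weights $\alpha^{(c)}_y$ are nonnegative and sum to one, so at least one is positive, and hence the mixture is positive. The remaining bookkeeping is to note that the final answer turn (the terminal mass added there) is treated separately from the process rewards, so it does not enter this sum. The closing interpretive sentence, that cumulative credit depends only on the endpoint potentials, is then immediate.
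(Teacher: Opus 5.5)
Your proposal is correct and follows essentially the same route as the paper: substitute the potential-difference definition of $r^{(k)}_{t,\mathrm{proc}}$, cancel the interior terms, and evaluate the boundary terms using $\tau^{(k)}_{\le 0}=(q)$, with the turn-independent factor $q_\theta(c(k)\mid q)$ pulled out. Your added remarks are sound details that the paper leaves implicit: positivity of $\tilde p_\theta$ guarantees the logarithms are finite, and the terminal mass sits outside $r^{(k)}_{t,\mathrm{proc}}$.
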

\vspace{-0.5em}

This telescoping form makes the process reward genuinely step-wise. The target distribution $q_\theta(c(k) \mid q)$ determines which outcome modes receive calibrated target mass, while the potential difference determines which turns increased conditional support for the cluster reference set.

\begin{proposition}[Factorization of outcome-potential shaping]
\label{prop:reduction}
The per-turn SIOP reward induced by $\Phi_\theta$ factors as
\vspace{-0.5em}
\begin{equation}
r^{(k)}_{t,\mathrm{proc}}
= \underbrace{q_\theta(c(k) \mid q)}_{\text{calibrated target mass}}
\cdot
\underbrace{\Bigl[\log\!\bigl(\tilde p_\theta(c(k) \mid \tau_{\le t}^{(k)})\bigr)
- \log\!\bigl(\tilde p_\theta(c(k) \mid \tau_{\le t-1}^{(k)})\bigr)\Bigr]}_{\text{cluster-level support gain}},
\end{equation}
i.e., cluster-level support shaping weighted by the calibrated target mass of that cluster. In the supervised limit where $q_\theta(c^\star \mid q)=1$, $R(c^\star)=\{y^\star\}$, and $\alpha_{y^\star}^{(c^\star)}=1$, the reward recovers standard gold-answer support shaping:
\begin{equation}
r^{(k)}_{t,\mathrm{proc}} = \log p_\theta(y^\star \mid \tau_{\le t}^{(k)}) - \log p_\theta(y^\star \mid \tau_{\le t-1}^{(k)}).
\end{equation}
\end{proposition}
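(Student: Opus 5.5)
The argument is a direct unpacking of the definitions. I will treat the factorization and the supervised limit as two separate steps.

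\emph{Factorization.} By definition, the outcome potential $\Phi_\theta(\tau_{\le t}^{(k)}, c(k))$ is the product of the prefix-independent factor $q_\theta(c(k)\mid q)$ and $\log \tilde p_\theta(c(k)\mid \tau_{\le t}^{(k)})$. The factor $q_\theta(c(k)\mid q)$ is a function of the query and of the whole sampled batch (through $m$, $R(c)$ and $r(c,q)$), but it does not depend on the turn index $t$. So in the difference $\Phi_\theta(\tau_{\le t}^{(k)}, c(k)) - \Phi_\theta(\tau_{\le t-1}^{(k)}, c(k))$ it is the same scalar in both terms and factors out by linearity. This gives the displayed product of calibrated target mass and cluster-level support gain; it is exactly the second line of the defining equation for $r^{(k)}_{t,\mathrm{proc}}$. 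I will state explicitly that $q_\theta$ is held fixed across turns of the same rollout, since it is computed once per query from the $K$ rollouts. I will also note that the logarithms are well defined whenever $\tilde p_\theta(c(k)\mid\cdot)>0$. This holds because each $p_\theta(y\mid\tau_{\le t})$ is a softmax-based sequence probability and hence strictly positive, and the weights $\alpha$ are nonnegative and sum to one.

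\emph{Supervised limit.} I will substitute the three assumptions, considering rollouts whose realized mode is $c(k)=c^\star$. First, with $R(c^\star)=\{y^\star\}$ and $\alpha_{y^\star}^{(c^\star)}=1$, the multi-reference surrogate collapses to $\tilde p_\theta(c^\star\mid\tau_{\le t}) = p_\theta(y^\star\mid \tau_{\le t})$ for every $t$, including $t-1$. Second, $q_\theta(c^\star\mid q)=1$ makes the prefactor equal to one. Plugging both into the factorized form yields $\log p_\theta(y^\star\mid\tau_{\le t}^{(k)})-\log p_\theta(y^\star\mid\tau_{\le t-1}^{(k)})$.

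The only subtle point is interpretive rather than technical, and it is where I expect the main obstacle. If $q_\theta(c^\star\mid q)=1$, all target mass sits on $c^\star$. Rollouts in other clusters then receive prefactor $q_\theta(c(k)\mid q)=0$, so their process reward vanishes. To match gold-answer shaping, which scores every rollout against $y^\star$, I will read the supervised limit as identifying the realized target mode with $c^\star$, so that the potential is evaluated at $c^\star$. I will remark that the identity holds for every rollout under that reading, and for the rollouts with $c(k)=c^\star$ in any case.
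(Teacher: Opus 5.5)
Your proposal is correct and follows the paper's proof. You pull the turn-independent factor $q_\theta(c(k)\mid q)$ out of the difference of potentials, then substitute $R(c^\star)=\{y^\star\}$, $\alpha_{y^\star}^{(c^\star)}=1$, and $q_\theta(c^\star\mid q)=1$, so that $\tilde p_\theta$ collapses to $p_\theta(y^\star\mid\cdot)$ and the prefactor to one. Your interpretive worry about rollouts outside $c^\star$ does not arise under the paper's definitions: every cluster in $\mathcal{C}(q)$ has $m(c\mid q)\ge 1/K$ and $\exp(\eta\, r(c,q))>0$, so exact equality $q_\theta(c^\star\mid q)=1$ forces $\mathcal{C}(q)=\{c^\star\}$ and hence $c(k)=c^\star$ for every rollout (the issue appears only under an idealized limiting reading such as $\eta\to\infty$).
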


Proposition~\ref{prop:reduction} reframes SIOP as calibrated target-mass weighting of support shaping: $q_\theta(\cdot \mid q)$ is a query-specific target distribution over semantic modes, while the log-difference term measures turn-wise cluster support gain. Gold-answer shaping is recovered when the target distribution concentrates on the gold cluster. Diffuse or low-mass target distributions attenuate credit. Appendix~\ref{app:bayesian_calibration} gives a Bayesian reading of the calibration step and proves when reliability evidence can reverse a spurious majority.

\subsection{Turn-Level Advantage Construction}

Process rewards identify which turns change semantic support, but global answer quality is still observed at the trajectory endpoint. We encode these two signals in an augmented immediate reward: process credit is assigned at every turn, while terminal outcome mass is added only at the final answer turn. For rollout $k$, this reward is
\begin{equation}
\bar r_t^{(k)} =
\lambda r^{(k)}_{t,\mathrm{proc}}
+ \mathbf{1}[t=T_k](1-\lambda)r_{\mathrm{final}}^{(k)},
\end{equation}
where $r_{\mathrm{final}}^{(k)} = q_\theta(c(k) \mid q)$ is the target-distribution weight of the rollout's semantic outcome mode. This design gives the final answer turn a direct outcome signal while allowing earlier turns to benefit only through return propagation.

For each query group (the $K$ rollouts of one prompt), we pool all augmented rewards across its rollouts and turns and compute group-normalized immediate rewards using the group's own statistics:
\begin{equation}
\tilde r_t^{(k)} = \frac{\bar r_t^{(k)} - \mu_g}{\sigma_g},\quad \text{where} \
\mu_g = \frac{1}{N_g}\sum_{k,t}\bar r_t^{(k)} \ \text{and} \
\sigma_g = \sqrt{\frac{1}{N_g}\sum_{k,t}\bigl(\bar r_t^{(k)}-\mu_g\bigr)^2},
\end{equation}
with $N_g = \sum_{k=1}^K T_k$, the total number of (rollout, turn) pairs in that query group. Normalization is computed per query group (GRPO-style) rather than batch-globally, so different queries do not interfere with each other's turn scales. Pooling all turns within a group also means the $z$-score does not artificially rescale each turn index to the same variance: turns with systematically larger rewards retain their relative magnitudes within the group.

To propagate this terminal outcome signal back to earlier turns, we define the turn advantage as a discounted return over normalized immediate rewards:
\begin{equation}
A_t^{(k)} = \sum_{u=t}^{T_k} \gamma^{u-t} \tilde r_u^{(k)}.
\end{equation}
This construction resolves the core mismatch with broadcasted rollout-level advantages: the final reward is not copied to all tokens, but only affects earlier turns through the discounted return of turn-local normalized rewards.

\vspace{-0.5em}
\subsection{Turn-Conditioned Policy Optimization}
\vspace{-0.5em}

To use turn-level advantages in a token-level clipped objective, we first identify the policy tokens generated by each interaction turn.
Let $\mathcal T_t^{(k)}$ denote the set of trainable policy tokens generated within turn $t$ of rollout $k$, excluding prompts, previous context replay, tool observations, and other non-policy text. We assign the same turn advantage $A_t^{(k)}$ to all trainable tokens in $\mathcal T_t^{(k)}$, but not across different turns.

With token-level importance ratio $\rho_{t,j}^{(k)}$ for token $j \in \mathcal T_t^{(k)}$, we minimize a GRPO-style clipped objective:
\vspace{-0.5em}
\begin{equation}
\begin{aligned}
\mathcal L_{\mathrm{SIOP}}
=\;& - \mathbb E\!\Bigg[
\sum_{k=1}^K \sum_{t=1}^{T_k} \sum_{j \in \mathcal T_t^{(k)}}
\min\!\Big(
\rho_{t,j}^{(k)} A_t^{(k)},
\mathrm{clip}(\rho_{t,j}^{(k)}, 1-\epsilon_{\mathrm{clip}}, 1+\epsilon_{\mathrm{clip}})
A_t^{(k)}
\Big)
\Bigg] \\
&\quad + \beta \, \mathrm{KL}(\pi_\theta \,\|\, \pi_{\theta_{\mathrm{old}}}).
\end{aligned}
\end{equation}
We refer to this objective as the \emph{SIOP objective}. It preserves the main appeal of group-based RL, namely critic-free optimization with normalized on-policy signals, while ensuring that the optimization signal remains turn-specific. Within a turn, tokens share the same local credit, but advantages differ across turns. Full training pseudocode is deferred to Appendix~\ref{app:algorithm}.

\vspace{-0.5em}
\section{Experiments}
\vspace{-0.5em}
\label{sec:experiment}

\begin{table}[t]
    \centering
    \small
    \caption{Main comparative results, reported as Exact Match and token-level F1 for 4B and 8B backbones. Gold-supervised baselines are shown in gray and are not bolded. TTRL, EMPO, and SIOP use no task-specific verifier. \textbf{Bold} marks the best verifier-free result within each metric block.}
    \resizebox{\linewidth}{!}{
    \begin{tabular}{p{3.7cm}|ccccccc|c}
    \toprule
    \multirow{2.4}{*}{Method} & \multicolumn{3}{c}{Single-Hop QA} & \multicolumn{4}{c|}{Multi-Hop QA} & \multirow{2.4}{*}{\textbf{Avg.}} \\
    \cmidrule(lr){2-4}\cmidrule(lr){5-8}
     & NQ & TriviaQA & PopQA & HotpotQA & 2Wiki & MuSiQue & Bamboogle & \\
    \midrule
    \emph{Exact Match Accuracy (4B)} &&&&&&&& \\
    \textcolor{gray!80}{GRPO (gold-sup.)} & \textcolor{gray!80}{0.283} & \textcolor{gray!80}{0.623} & \textcolor{gray!80}{0.436} & \textcolor{gray!80}{0.330} & \textcolor{gray!80}{0.340} & \textcolor{gray!80}{0.121} & \textcolor{gray!80}{0.464} & \textcolor{gray!80}{0.371} \\
    \textcolor{gray!80}{IGPO (gold-sup.)} & \textcolor{gray!80}{0.262} & \textcolor{gray!80}{0.598} & \textcolor{gray!80}{0.449} & \textcolor{gray!80}{0.359} & \textcolor{gray!80}{0.340} & \textcolor{gray!80}{0.113} & \textcolor{gray!80}{0.464} & \textcolor{gray!80}{0.369} \\
    TTRL    & 0.262 & 0.607 & 0.416 & 0.318 & 0.314 & 0.090 & 0.400 & 0.344 \\
    EMPO    & 0.270 & \textbf{0.611} & 0.412 & 0.330 & 0.301 & 0.090 & \textbf{0.456} & 0.353 \\
    SIOP & \textbf{0.281} & 0.604 & \textbf{0.416} & \textbf{0.344} & \textbf{0.367} & \textbf{0.100} & 0.408 & \textbf{0.360} \\
    \midrule
    \emph{Token-level F1 (4B)} &&&&&&&& \\
    \textcolor{gray!80}{GRPO (gold-sup.)} & \textcolor{gray!80}{0.408} & \textcolor{gray!80}{0.703} & \textcolor{gray!80}{0.478} & \textcolor{gray!80}{0.441} & \textcolor{gray!80}{0.409} & \textcolor{gray!80}{0.183} & \textcolor{gray!80}{0.534} & \textcolor{gray!80}{0.451} \\
    \textcolor{gray!80}{IGPO (gold-sup.)} & \textcolor{gray!80}{0.396} & \textcolor{gray!80}{0.682} & \textcolor{gray!80}{0.486} & \textcolor{gray!80}{0.451} & \textcolor{gray!80}{0.381} & \textcolor{gray!80}{0.184} & \textcolor{gray!80}{0.550} & \textcolor{gray!80}{0.447} \\
    TTRL    & 0.402 & 0.691 & 0.458 & 0.401 & 0.357 & 0.157 & 0.520 & 0.427 \\
    EMPO    & \textbf{0.404} & \textbf{0.696} & {0.460} & 0.414 & 0.347 & {0.159} & 0.510 & 0.427 \\
    SIOP & 0.400 & 0.691 & \textbf{0.465} & \textbf{0.443} & \textbf{0.411} & \textbf{0.179} & \textbf{0.528} & \textbf{0.445} \\
    \midrule[\heavyrulewidth]
    \emph{Exact Match Accuracy (8B)} &&&&&&&& \\
    \textcolor{gray!80}{GRPO (gold-sup.)} & \textcolor{gray!80}{0.270} & \textcolor{gray!80}{0.643} & \textcolor{gray!80}{0.418} & \textcolor{gray!80}{0.391} & \textcolor{gray!80}{0.363} & \textcolor{gray!80}{0.141} & \textcolor{gray!80}{0.488} & \textcolor{gray!80}{0.388} \\
    \textcolor{gray!80}{IGPO (gold-sup.)} & \textcolor{gray!80}{0.275} & \textcolor{gray!80}{0.641} & \textcolor{gray!80}{0.461} & \textcolor{gray!80}{0.402} & \textcolor{gray!80}{0.393} & \textcolor{gray!80}{0.178} & \textcolor{gray!80}{0.456} & \textcolor{gray!80}{0.401} \\
    TTRL    & 0.266 & 0.623 & 0.416 & 0.385 & 0.357 & \textbf{0.133} & 0.440 & 0.374 \\
    EMPO    & 0.251 & 0.630 & 0.415 & \textbf{0.392} & 0.357 & 0.120 & 0.472 & 0.377 \\
    SIOP & \textbf{0.270} & \textbf{0.646} & \textbf{0.445} & 0.348 & \textbf{0.412} & 0.107 & \textbf{0.488} & \textbf{0.388} \\
    \midrule
    \emph{Token-level F1 (8B)} &&&&&&&& \\
    \textcolor{gray!80}{GRPO (gold-sup.)} & \textcolor{gray!80}{0.422} & \textcolor{gray!80}{0.728} & \textcolor{gray!80}{0.474} & \textcolor{gray!80}{0.484} & \textcolor{gray!80}{0.416} & \textcolor{gray!80}{0.224} & \textcolor{gray!80}{0.599} & \textcolor{gray!80}{0.478} \\
    \textcolor{gray!80}{IGPO (gold-sup.)} & \textcolor{gray!80}{0.407} & \textcolor{gray!80}{0.730} & \textcolor{gray!80}{0.498} & \textcolor{gray!80}{0.499} & \textcolor{gray!80}{0.461} & \textcolor{gray!80}{0.276} & \textcolor{gray!80}{0.583} & \textcolor{gray!80}{0.493} \\
    TTRL    & 0.401 & 0.722 & 0.476 & 0.431 & 0.416 & 0.192 & 0.539 & 0.454 \\
    EMPO    & \textbf{0.406} & 0.717 & 0.450 & 0.445 & 0.420 & 0.193 & 0.580 & 0.459 \\
    SIOP & 0.403 & \textbf{0.736} & \textbf{0.479} & \textbf{0.453} & \textbf{0.474} & \textbf{0.203} & \textbf{0.612} & \textbf{0.480} \\
    
    \bottomrule
    \end{tabular}
    }
    \label{tab:prelim_main}
    \vspace{-1em}
    \end{table}

\begin{figure}[t]
    \centering
    \includegraphics[width=\linewidth]{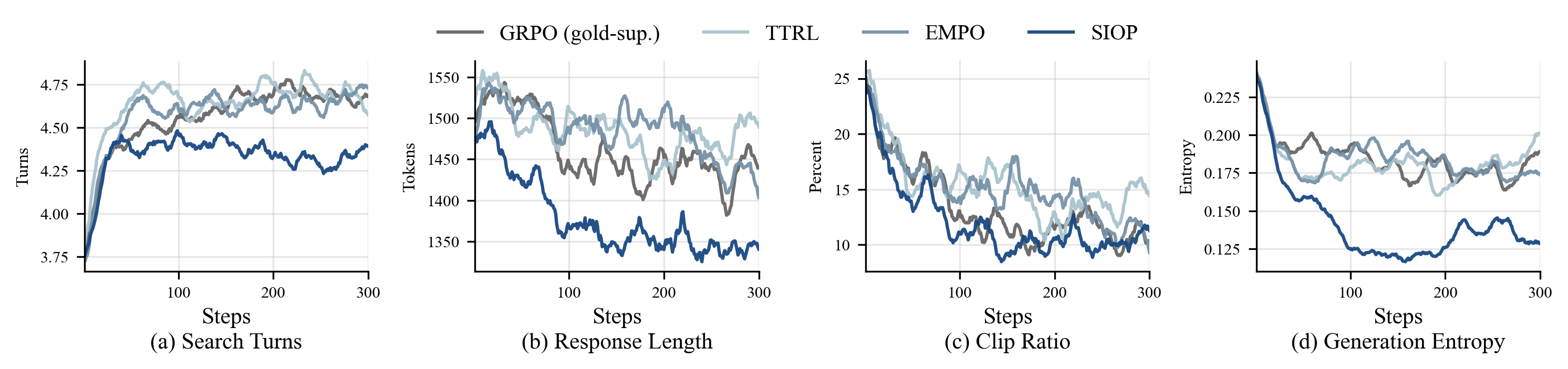}
    \vspace{-1.5em}
    \caption{Training dynamics, including search turns, response length, response-length clip ratio, and generation entropy.}
    \label{fig:behavior_training_curves}
    \vspace{-1.5em}
    \end{figure}

\begin{table}[t]
    \centering
    \small
    \caption{Component ablation. Each row removes one ingredient from \emph{Full SIOP}: multi-reference cluster support, reliability calibration, both together, or the per-turn process reward. The agent loop, retriever, optimizer, and turn-level advantage construction are kept identical across rows, so each row isolates a single design choice. \textbf{Bold} marks the best entry in each column.}
    \resizebox{\linewidth}{!}{
    \begin{tabular}{p{4cm}|ccccccc|c}
    \toprule
    \multirow{2.4}{*}{Method} & \multicolumn{3}{c}{Single-Hop QA} & \multicolumn{4}{c|}{Multi-Hop QA} & \multirow{2.4}{*}{\textbf{Avg.}} \\
    \cmidrule(lr){2-4}\cmidrule(lr){5-8}
     & NQ & TriviaQA & PopQA & HotpotQA & 2Wiki & MuSiQue & Bamboogle & \\
    \midrule
    \emph{Accuracy (Exact Match)} &&&&&&&& \\
    Full SIOP                        & \textbf{0.281} & 0.604          & 0.416          & \textbf{0.344} & \textbf{0.367} & \textbf{0.100} & \textbf{0.408}          & \textbf{0.360} \\
    \;w/o multi-ref                  & 0.266          & 0.604          & 0.408          & 0.318          & 0.328          & 0.090          & 0.384          & 0.343 \\
    \;w/o calibration                & 0.252          & 0.609          & 0.416          & 0.332          & 0.336          & 0.082          & 0.400          & 0.347 \\
    \;w/o multi-ref, w/o calib       & 0.256          & 0.596          & 0.418          & 0.311          & 0.313          & 0.076          & 0.407          & 0.340 \\
    \;w/o process reward             & 0.273          & \textbf{0.627} & \textbf{0.426} & 0.330          & 0.307          & 0.094 & 0.368          & 0.346 \\
    \midrule
    \emph{Token-level F1} &&&&&&&& \\
    Full SIOP                        & 0.400          & 0.691          & 0.465          & \textbf{0.443} & \textbf{0.411} & \textbf{0.179} & \textbf{0.528} & \textbf{0.445} \\
    \;w/o multi-ref                  & \textbf{0.407} & 0.699          & 0.463          & 0.405          & 0.379          & 0.155          & 0.481          & 0.427 \\
    \;w/o calibration                & 0.396          & 0.687          & \textbf{0.473} & 0.418          & 0.378          & 0.147          & 0.521          & 0.431 \\
    \;w/o multi-ref, w/o calib       & 0.403          & 0.687          & 0.459          & 0.401          & 0.354          & 0.144          & 0.522          & 0.424 \\
    \;w/o process reward             & 0.400          & \textbf{0.703} & 0.464          & 0.429          & 0.357          & 0.172          & 0.481          & 0.429 \\
    \bottomrule
    \end{tabular}
    }
    \label{tab:abl_components}
    \vspace{-1.5em}
    \end{table}

\subsection{Experimental Settings}

\textbf{Tasks and benchmarks.}
Our primary evaluation setting is search-augmented multi-turn question answering, which is the cleanest testbed for the proposed method because intermediate retrieval decisions are essential. Following this line of work, we train on a merged search-QA environment built from the Natural Questions (NQ) \cite{kwiatkowski-etal-2019-natural}, 2WikiMultiHopQA \cite{xanh2020_2wikimultihop}, and TriviaQA \cite{joshi-etal-2017-triviaqa} training splits and evaluate on seven benchmarks: NQ, TriviaQA \cite{joshi-etal-2017-triviaqa}, HotpotQA \cite{yang-etal-2018-hotpotqa}, 2WikiMultiHopQA \cite{xanh2020_2wikimultihop}, MuSiQue \cite{trivedi-etal-2022-musique}, Bamboogle \cite{press-etal-2023-measuring}, and PopQA \cite{mallen-etal-2023-trust}. We report results grouped by single-hop QA and multi-hop QA, matching the structure of Table~\ref{tab:prelim_main}.

\textbf{Backbone and environment.}
We use Qwen3-4B \cite{yang2025qwen3technicalreport} and Qwen3-8B \cite{yang2025qwen3technicalreport} as the policy backbone and run a search-augmented agent with a fixed interface across methods: at each turn the model produces reasoning text, optionally issues a search query, and receives retrieved passages before deciding the next turn or final answer. The prompting template, retrieval backend, passages per turn, and maximum interaction budget are held fixed across baselines so comparisons isolate the reward design. Full hyperparameters are listed in Appendix~\ref{app:setup}.

\textbf{Metrics.}
We report Exact Match (EM) and token-level F1 on the final extracted answer. In addition, we track behavior-level diagnostics in Figure~\ref{fig:behavior_training_curves}: average number of search turns, response length, response-length clip rate, and generation entropy.

\textbf{Baselines.}
SIOP is compared against two verifier-free baselines: {TTRL}~\cite{zuo_ttrl_2025}, which rewards only the dominant semantic cluster as a hard majority pseudo-target, and {EMPO}~\cite{zhang2025right}, which rewards each rollout by the empirical frequency of its answer's cluster. We additionally include two gold-supervised baselines for context: {GRPO}~\cite{shao_deepseekmath_2024}, which uses exact-match correctness of the gold answer as an outcome reward, and {IGPO}~\cite{wang2025igpo}, which rewards each turn according to how much it increases the model's probability of the gold answer.

\subsection{Comparative Results}
\label{sec:prelim_results}

Table~\ref{tab:prelim_main} reports EM and F1 on the seven-benchmark search-QA suite.
On the 4B backbone, SIOP is the best verifier-free method on average, improving over EMPO by +0.7\% EM and +1.8\% F1 and over TTRL by +1.6\% EM and +1.8\% F1. The gains are clearest on multi-hop benchmarks where intermediate retrieval matters: on HotpotQA and 2Wiki, SIOP beats both verifier-free baselines and also exceeds the gold-supervised GRPO outcome baseline on both metrics, with the 2Wiki EM gap to EMPO reaching +6.6\%. SIOP also closes most of the supervision gap to the gold-supervised IGPO turn-level baseline (-0.9\% EM, -0.2\% F1 on average) without using any gold labels at training time.
On the 8B backbone, SIOP again achieves the strongest verifier-free average, reaching 0.388 EM and 0.480 F1. It matches the gold-supervised GRPO outcome baseline in average EM and is slightly higher in average F1, while still trailing on some individual tasks such as HotpotQA EM and MuSiQue EM. Against the gold-supervised IGPO turn-level baseline (0.401 EM / 0.493 F1), SIOP trails by 1.3\% EM and 1.3\% F1 on average, a small remaining supervision gap given that SIOP uses no gold labels at training time. The F1 advantage over GRPO plausibly stems from SIOP's soft semantic-cluster supervision rewarding equivalent surface forms, while the dense per-turn signal compensates for GRPO's sparse outcome reward at scale. These results suggest that the self-induced turn-level signal transfers to the larger backbone, although per-task variation remains.

\vspace{-0.5em}
\subsection{Training Dynamics Analysis}
\vspace{-0.5em}
\label{sec:training_dynamics}

Figure~\ref{fig:behavior_training_curves} tracks search turns, response length, response-length clipping, and generation entropy over 300 optimization steps, testing whether turn-level credit changes behavior beyond final accuracy. In Fig.~\ref{fig:behavior_training_curves}(a), SIOP initially increases tool use and then stabilizes below the verifier-free baselines, suggesting controlled retrieval rather than search suppression. In Fig.~\ref{fig:behavior_training_curves}(b, c), SIOP produces shorter responses with lower clip ratios, indicating less redundant continuation under the response budget. In Fig.~\ref{fig:behavior_training_curves}(d), SIOP reaches a lower-entropy generation regime. Overall, the dynamics show that SIOP reduces redundant search and continuation while preserving end-task quality.

\begin{table}[t]
    \centering
    \small
    \caption{Credit-assignment ablation. Full SIOP and \emph{w/o turn-level advantage} share the same SIOP per-turn process reward, and only the advantage construction differs. GRPO uses gold-answer EM with broadcast advantage as a gold-supervised outcome baseline (not bolded). \textbf{Bold} marks the better entry between the two SIOP-reward rows.}
    \resizebox{\linewidth}{!}{
    \begin{tabular}{p{4cm}|ccccccc|c}
    \toprule
    \multirow{2.4}{*}{Method} & \multicolumn{3}{c}{Single-Hop QA} & \multicolumn{4}{c|}{Multi-Hop QA} & \multirow{2.4}{*}{\textbf{Avg.}} \\
    \cmidrule(lr){2-4}\cmidrule(lr){5-8}
     & NQ & TriviaQA & PopQA & HotpotQA & 2Wiki & MuSiQue & Bamboogle & \\
    \midrule
    \emph{Accuracy (Exact Match)} &&&&&&&& \\
    
    Full SIOP                        & \textbf{0.281} & \textbf{0.604} & 0.416          & 0.344          & \textbf{0.367} & \textbf{0.100} & 0.408          & \textbf{0.360} \\
    \;w/o turn-level advantage       & 0.252          & 0.602          & \textbf{0.438} & \textbf{0.348} & 0.287          & 0.098          & \textbf{0.432} & 0.351 \\
    \midrule
    \emph{Token-level F1} &&&&&&&& \\
    
    Full SIOP                        & \textbf{0.400} & \textbf{0.691} & 0.465          & \textbf{0.443} & \textbf{0.411} & \textbf{0.179} & \textbf{0.528} & \textbf{0.445} \\
    \;w/o turn-level advantage       & 0.391          & 0.680          & \textbf{0.489} & 0.428          & 0.333          & 0.164          & 0.510          & 0.428 \\
    \bottomrule
    \end{tabular}
    }
    \label{tab:abl_credit}
    \vspace{-1.5em}
    \end{table}

\subsection{Ablation Studies}
\vspace{-0.5em}
\noindent\textbf{Component ablation.} Table~\ref{tab:abl_components} removes multi-reference cluster support, reliability calibration, both together, or the per-turn process reward ($\lambda{=}0$) from Full SIOP. The $\lambda{=}0$ row removes the per-turn process reward, so intermediate turns receive zero immediate reward. The calibrated $q_\theta(c\mid q)$ terminal reward still flows backward through the turn-level advantage construction.
Dropping calibration, multi-reference, or both lowers average EM (-1.3\%, -1.7\%, -2.0\%) and F1 (-1.4\%, -1.8\%, -2.1\%), showing that both structural ingredients matter. Multi-reference support carries most multi-hop gains (HotpotQA and 2Wiki on both metrics, MuSiQue F1), while calibration mainly sharpens single-hop factoid answers (\emph{w/o multi-ref} wins NQ and TriviaQA F1). Removing the process reward costs 1.4\% EM and 1.6\% F1 on average, mainly on multi-hop bridging tasks (2Wiki -6.0\% EM, Bamboogle -4.0\% EM), so process rewards help most when credit spans several informative turns.

\noindent\textbf{Credit-assignment mechanism.} Table~\ref{tab:abl_credit} keeps the SIOP per-turn process reward fixed and replaces only the turn-level advantage with a GRPO-style broadcast: per-turn rewards are summed into one trajectory scalar and copied to every token (\emph{w/o turn-level advantage}).
With the reward held fixed, broadcasting the advantage drops EM from 36.0\% to 35.1\% and F1 from 44.5\% to 42.8\%, close to the outcome-only EMPO baseline in Table~\ref{tab:prelim_main} (35.3\% EM / 42.7\% F1). The drop concentrates on multi-hop benchmarks (2Wiki, MuSiQue), indicating that the gains come from turn-level credit assignment rather than denser reward shaping alone.

\section{Conclusion}
\label{sec:conclusion}
\vspace{-0.5em}

In this work, we have introduced verifier-free turn-level credit assignment for agentic reinforcement learning and instantiated it with Self-Induced Outcome Potential (SIOP). SIOP treats clustered final answers as latent future states and rewards each turn by the prefix-conditioned support gain it produces for the rollout's realized mode, weighted by a reliability-calibrated belief over modes. On seven question-answering benchmarks with Qwen3-4B and Qwen3-8B backbones, SIOP outperforms outcome-only verifier-free baselines and approaches the average performance of a gold-supervised outcome baseline without using gold labels during training.


\section{Acknowledgement}

The research work described in this paper was conducted in the JC STEM Lab of Smart City funded by The Hong Kong Jockey Club Charities Trust under Contract 2023-0108.  The work was supported in part by the Hong Kong SAR Government under the Global STEM Professorship and Research Talent Hub. The work of S. Hu was  supported in part by the Hong Kong Innovation and Technology Commission under InnoHK Project CIMDA.

{
    \small
    \bibliographystyle{unsrtnat}
    \bibliography{ref, ref2, ref3}
}

\appendix

\newpage
\begin{center}
    \Large \textbf{Appendix}
\end{center}

\section*{Content}
{\hypersetup{linkcolor=c1}\color{c1}
\hyperref[sec:related]{A. Related Work} \dotfill \pageref{sec:related}\\
\makebox[1em][l]{} \hyperref[sec:related_step_credit]{A.1 Outcome RL and Supervised Step-Level Credit Assignment} \dotfill \pageref{sec:related_step_credit}\\
\makebox[1em][l]{} \hyperref[sec:related_label_free]{A.2 Label-Free Reinforcement Learning from Output Distributions} \dotfill \pageref{sec:related_label_free}\\
\makebox[1em][l]{} \hyperref[sec:related_search_agents]{A.3 Open-Ended Search and Tool-Using Agents} \dotfill \pageref{sec:related_search_agents}\\
\hyperref[app:setup]{B. Implementation Details} \dotfill \pageref{app:setup}\\
\makebox[1em][l]{} \hyperref[app:notation]{B.1 Method Notation} \dotfill \pageref{app:notation}\\
\makebox[1em][l]{} \hyperref[app:training_details]{B.2 Training Details} \dotfill \pageref{app:training_details}\\
\makebox[1em][l]{} \hyperref[app:retrieval_details]{B.3 Retrieval Details} \dotfill \pageref{app:retrieval_details}\\
\makebox[1em][l]{} \hyperref[app:prompt_details]{B.4 Prompt Details} \dotfill \pageref{app:prompt_details}\\
\makebox[1em][l]{} \hyperref[app:semantic_clustering]{B.5 Semantic Clustering} \dotfill \pageref{app:semantic_clustering}\\
\makebox[1em][l]{} \hyperref[alg:cluster]{B.6 Greedy Bidirectional-Entailment Clustering} \dotfill \pageref{alg:cluster}\\
\makebox[1em][l]{} \hyperref[tab:training_hyperparameters]{B.7 Training Hyperparameters} \dotfill \pageref{tab:training_hyperparameters}\\
\makebox[1em][l]{} \hyperref[fig:system_prompt]{B.8 Task Prompt Template} \dotfill \pageref{fig:system_prompt}\\
\hyperref[app:algorithm]{C. Training Algorithm} \dotfill \pageref{app:algorithm}\\
\makebox[1em][l]{} \hyperref[alg:siop]{C.1 Training with SIOP} \dotfill \pageref{alg:siop}\\
\hyperref[app:proofs]{D. Theoretical Characterizations and Proofs} \dotfill \pageref{app:proofs}\\
\makebox[1em][l]{} \hyperref[app:proof_telescoping]{D.1 Proof of Proposition~\ref{prop:telescoping}} \dotfill \pageref{app:proof_telescoping}\\
\makebox[1em][l]{} \hyperref[app:proof_reduction]{D.2 Proof of Proposition~\ref{prop:reduction}} \dotfill \pageref{app:proof_reduction}\\
\makebox[1em][l]{} \hyperref[app:bayesian_calibration]{D.3 Bayesian View and Reliability Calibration} \dotfill \pageref{app:bayesian_calibration}\\
\hyperref[app:additional_results]{E. Additional Results} \dotfill \pageref{app:additional_results}\\
\makebox[1em][l]{} \hyperref[app:overhead]{E.1 Compute Overhead} \dotfill \pageref{app:overhead}\\
\makebox[1em][l]{} \hyperref[app:case_study]{E.2 Case Study} \dotfill \pageref{app:case_study}\\
\hyperref[app:reward_hacking]{F. Reward Hacking and Confident-but-Wrong Modes} \dotfill \pageref{app:reward_hacking}\\
\hyperref[app:limitations]{G. Limitations and Future Work} \dotfill \pageref{app:limitations}\\
}
\vspace{1em}

\section{Related Work}
\label{sec:related}

\subsection{Outcome RL and Supervised Step-Level Credit Assignment}
\label{sec:related_step_credit}

Modern reasoning-model training typically starts from outcome-level RL with verifiable rewards. DeepSeekMath \cite{shao_deepseekmath_2024} introduced GRPO as a critic-free optimizer that compares sampled responses within each prompt group, and DeepSeek-R1 \cite{guoDeepSeekR1IncentivizesReasoning2025} showed that outcome rewards can elicit long chain-of-thought behavior at scale. DAPO \cite{yu_dapo_2025} and GSPO \cite{zheng_group_2025} further improve this group-based paradigm by addressing sampling, clipping, loss normalization, and importance-weighting stability. This line clarifies how to optimize a policy when an outcome reward is already available. It does not by itself specify what the reward target should be without a task-specific verifier, or how that target should be assigned across interaction turns.

The complementary literature studies whether intermediate reasoning steps should receive their own supervision. Lightman et al.\ \cite{lightman_lets_2023} showed that process supervision can outperform outcome supervision for mathematical reasoning. Zhang et al.\ \cite{zhang_entropy-regularized_2024} generalized process reward models through entropy-regularized MDPs, and rStar-Math \cite{guan_rstar-math_nodate} uses MCTS-based self-evolved process preference models. Other methods refine the temporal granularity of RL itself: VinePPO \cite{kazemnejad_vineppo_nodate} uses Monte Carlo rollouts to replace learned value functions for unbiased credit assignment, SPO \cite{guo_segment_2025} operates at a segment level between tokens and trajectories, TEMPO \cite{tran_exploiting_2025} derives nonparametric branch-gated corrections from prefix trees, and Zeng et al.\ \cite{zeng_reinforcing_2025} estimate turn-level advantages for multi-turn tool-use agents modeled as MDPs. These approaches show that finer credit is useful, but they still assume access to supervised step labels, verifiable outcomes, or an externally supplied task reward.

For multi-turn agents, information-potential methods make the dependence on a target especially explicit. IGPO \cite{wang2025igpo} rewards a turn by the marginal increase in the policy's probability of the gold answer. $\Delta$Belief-RL \cite{auzina2026intrinsiccreditassignmentlong} uses belief updates for long-horizon credit assignment. TIPS \cite{xie2026tips} defines an information-potential reward for search-augmented LLMs, InfoPO \cite{kong2026infopo} adapts the idea to user-centric settings, and InfoReasoner \cite{hu2026optimizing} optimizes retrieval-augmented reasoning with a synthetic semantic information-gain reward over retrieval trajectories. These methods establish that turn-level shaping is important for long-horizon optimization, but their potentials are still anchored to a gold answer, supervised belief state, externally specified utility, or retrieval-specific information objective. SIOP changes the role of the target: it constructs a query-specific target space from sampled final answers, treats semantic outcome modes as latent future states, and computes turn rewards as potential changes over this self-induced state space. The supervised gold-answer case is therefore a limiting case rather than the starting assumption (Proposition~\ref{prop:reduction}).

\subsection{Label-Free Reinforcement Learning from Output Distributions}
\label{sec:related_label_free}

Label-free reasoning optimization tries to replace benchmark labels or task-specific verifiers with signals induced by the model's own samples. One family relies on agreement structure across multiple outputs. Semantic entropy by Kuhn et al.\ \cite{kuhn2023semantic} and Farquhar et al.\ \cite{farquhar_detecting_2024} estimates uncertainty over meaning-equivalent answer clusters, while TTRL \cite{zuo_ttrl_2025} uses majority behavior from repeated sampling as a self-generated label. EMPO \cite{zhang2025right} directly optimizes semantic clusters by minimizing semantic entropy, SEED-GRPO \cite{chen_seed-grpo_2025} uses semantic entropy to modulate policy updates on high-uncertainty prompts, and Agarwal et al.\ \cite{agarwal2025unreasonable} show that entropy minimization alone can be surprisingly effective. Xu et al.\ \cite{xu2025exploration} further study entropy scheduling from exploration to exploitation under noisy supervision. These methods establish that output distributions contain usable training signal, but they also expose a failure mode: high agreement is not always correctness. RESTRAIN \cite{yu_restrain_2025} addresses this issue by penalizing spurious consensus rather than trusting majority votes unconditionally.

A second family uses internal model feedback, reference likelihood, or self-evaluation as the reward source. RLPR \cite{yu_rlpr_2025} uses token probabilities assigned to reference answers, NOVER \cite{liu_nover_2025} studies verifier-free incentive training from standard supervised data, DRO \cite{xu_direct_2025} uses reasoning-reflection signals, VeriFree \cite{zhou2025reinforcing} optimizes reference-answer likelihood without an external verifier, and Intuitor \cite{zhao2025learning} replaces external rewards with self-certainty. Related self-improvement methods obtain training signal through self-rewarding or self-play: Self-Rewarding Language Models \cite{yuan2024self} use LLM-as-a-Judge rewards, SPIN \cite{chen2024self} frames alignment as self-play, SPIRAL \cite{liu2018spiral} and SPICE \cite{liu2025spice} extend self-play to multi-turn settings, and Absolute Zero \cite{chen2025absolute} and R-Zero \cite{huang2025rzero} study self-evolution from zero external data.

The common limitation for our setting is not the absence of useful unlabeled signals, but their granularity. TTRL \cite{zuo_ttrl_2025}, EMPO \cite{zhang2025right}, SEED-GRPO \cite{chen_seed-grpo_2025}, RLPR \cite{yu_rlpr_2025}, NOVER \cite{liu_nover_2025}, VeriFree \cite{zhou2025reinforcing}, and Intuitor \cite{zhao2025learning} primarily attach rewards to prompts, final answers, or complete trajectories. They help decide which sampled outcomes or prompts are more reliable, but they do not identify which intermediate interaction turns moved the trajectory toward those outcomes. SIOP uses the same kind of unlabeled distributional structure for a different object: semantic outcome modes define a query-specific latent future-state space, and process rewards are computed as potential changes over that space. This turns outcome-level reliability into turn-level credit assignment rather than only outcome selection.

\subsection{Open-Ended Search and Tool-Using Agents}
\label{sec:related_search_agents}

Search and tool-use agents make credit assignment harder than single-turn generation because the model must decide not only what to answer, but also when to retrieve, which evidence to trust, and when to stop interacting with the environment. The agentic RAG survey by Li et al.\ \cite{li_towards_2025} summarizes this shift from static retrieval to iterative information seeking. ReSearch \cite{chen2025researchlearningreasonsearch} and Scent of Knowledge \cite{qian2025scentknowledgeoptimizingsearchenhanced} further show that search behavior itself can be optimized rather than treated as a fixed preprocessing step. Open-domain QA benchmarks such as PopQA \cite{mallen-etal-2023-trust}, HotpotQA \cite{yang-etal-2018-hotpotqa}, and MuSiQue \cite{trivedi-etal-2022-musique} expose the same issue empirically: final-answer quality depends on intermediate evidence acquisition, especially for long-tail and multi-hop questions.

Reward design has therefore become central in tool-use and search-augmented RL. ToolRL \cite{qian_toolrl_2025} studies how reward granularity, scale, and temporal dynamics affect tool-use learning. Process vs. Outcome Reward \cite{zhang2025processvsoutcomereward} shows that finer-grained feedback can improve long-horizon agentic RAG, while also emphasizing the cost of obtaining gold intermediate supervision. These results support the need for turn-level feedback, but they still leave open how to obtain such feedback when neither intermediate labels nor task-specific answer verifiers are available.

Recent agentic RL methods adapt group-based optimization to multi-step environments. Search-R1 \cite{jin2025searchr} and ReSearch \cite{chen_research_2025} train models to interleave reasoning with search, while GiGPO \cite{feng_group--group_2025}, Step-GRPO \cite{peng_learning_2025}, and Tree-GRPO \cite{ji_tree_2025} construct group comparisons, step-level advantages, or tree-structured credit over agent trajectories. These methods improve optimization for search agents, but their learning signal still typically comes from a task reward, answer verifier, or environment-defined success criterion.
SIOP addresses a complementary supervision gap. Instead of assuming that the environment can verify the final answer or score each tool call, it constructs a latent outcome space from unlabeled sampled answers and uses potential changes over that space to score intermediate turns. This lets the same output distribution that supports label-free outcome selection also guide which retrieval decisions and tool calls were informative.

\section{Implementation Details}
\label{app:setup}

\phantomsection\label{app:notation}
\noindent\textbf{Method notation.} Table~\ref{tab:notation_core} lists the recurring symbols used in the method section. Local quantities used only in one equation are defined in place.

\begin{table}[t]
    \centering
    \footnotesize
    \caption{Core notation used in the method section.}
    \label{tab:notation_core}
    \begingroup
    \renewcommand{\arraystretch}{1.08}
    \begin{tabular}{@{}p{0.22\linewidth}p{0.72\linewidth}@{}}
    \toprule
    Symbol & Meaning \\
    \midrule
    $q$ & Input query. \\
    $K$ & Number of sampled rollouts for the same query. \\
    $\tau^{(k)}, \tau_{\le t}^{(k)}$ & Rollout $k$ and its prefix after turn $t$. \\
    $T_k$ & Number of assistant turns in rollout $k$. \\
    $y^{(k)}$ & Final answer emitted by rollout $k$. \\
    $c, c(k)$ & Semantic outcome mode and the mode assigned to rollout $k$. \\
    $\mathcal C(q)$ & Semantic outcome modes induced by the sampled rollouts for query $q$. \\
    $S_c$ & Set of rollouts assigned to semantic mode $c$. \\
    $R(c), \alpha_y^{(c)}$ & Reference answers for mode $c$ and their normalized weights. \\
    $m(c\mid q)$ & Empirical mass of mode $c$, computed from rollout counts. \\
    $\mathrm{obs}_t^{(k)}$ & Tool observation returned at turn $t$ of rollout $k$. \\
    $s_{\mathrm{NLI}}(\mathrm{obs},y)$ & Entailment confidence that observation $\mathrm{obs}$ supports answer $y$. \\
    $r(c,q)$ & Reliability score for mode $c$, instantiated as evidence support in search-agent experiments. \\
    $\eta$ & Strength of the reliability calibration applied to empirical mode mass. \\
    $q_\theta(c\mid q)$ & Reliability-calibrated target distribution over semantic outcome modes. \\
    $\tilde p_\theta(c\mid\tau_{\le t})$ & Multi-reference approximation to prefix-conditioned support for mode $c$. \\
    $\Phi_\theta(\tau_{\le t}^{(k)},c(k))$ & Outcome potential for prefix $\tau_{\le t}^{(k)}$ and rollout $k$'s mode. \\
    $r_{t,\mathrm{proc}}^{(k)}$ & Process reward from the change in outcome potential at turn $t$. \\
    $\lambda$ & Mixing coefficient between process reward and terminal outcome mass. \\
    $\bar r_t^{(k)}$ & Immediate turn reward after adding terminal outcome mass at the final answer turn. \\
    $\gamma$ & Discount factor used to propagate later rewards backward. \\
    $\tilde r_t^{(k)}, A_t^{(k)}$ & Group-normalized immediate reward and the discounted turn advantage used for policy optimization. \\
    $\mathcal T_t^{(k)}$ & Trainable policy tokens generated within turn $t$ of rollout $k$. \\
    \bottomrule
    \end{tabular}
    \endgroup
    \vspace{-0.8em}
\end{table}

All training hyperparameters are summarized in Table~\ref{tab:training_hyperparameters}, and the system prompt template is shown in Figure~\ref{fig:system_prompt}. The remainder of this subsection covers the three components of the pipeline that are not a single numeric value: training, retrieval, and semantic clustering.

\phantomsection\label{app:training_details}
\noindent\textbf{Training.} The policy backbones are Qwen3-4B and Qwen3-8B. The actor is trained with the verl framework (v0.8.0.dev) using FSDP and a hybrid-engine weight sync to a vLLM 0.11 rollout backend on $8\times$ NVIDIA A800-80GB, with PyTorch 2.8.0+cu128 and flash-attention 2.8.1. Optimization uses AdamW with a constant learning rate of $10^{-6}$, KL regularization to a frozen reference model (\texttt{low\_var\_kl} estimator, coefficient 0.005), discount factor $\gamma{=}1$, train batch size 128, and a PPO mini-batch size of 32 with a micro-batch size of 4 per GPU. The SIOP hyperparameters are $N{=}3$ references per cluster, $\eta{=}1$ reliability calibration strength, and $\lambda{=}0.5$ process/terminal mixing. Each run is trained for 300 optimizer steps with validation every 50 steps. The compute profile in Table~\ref{tab:compute_overhead} is measured on Qwen3-4B, where a single SIOP run takes 11.75 wall-clock hours on $8\times$A800-80GB, or about 94 GPU-hours.

\phantomsection\label{app:retrieval_details}
\noindent\textbf{Retrieval.} The retriever is a local Wikipedia index built from the \texttt{wiki-18} dump with \texttt{intfloat/e5-base-v2} sentence embeddings and a FAISS flat index over the full corpus. It runs as a standalone HTTP service with FAISS GPU acceleration, independent of the training processes, and each tool call returns the top-3 passages. Tool calls use the Hermes \texttt{<tool\_call>}/\texttt{<tool\_response>} format, with $\texttt{max\_response\_length}{=}2048$ and the tool response length capped at $2048$ tokens per call.

\phantomsection\label{app:prompt_details}
\noindent\textbf{Prompt.} All methods share the same prompt template, as shown in Figure~\ref{fig:system_prompt}. A minimal system message sets the assistant role, and a user message specifies the interaction protocol: reasoning is wrapped in \texttt{<think>}/\texttt{</think>} tags after every new observation, search queries are issued with \texttt{<tool\_call>}/\texttt{</tool\_call>}, retrieval results return between \texttt{<tool\_response>}/\texttt{</tool\_response>}, and the final answer is emitted in \texttt{<answer>}/\texttt{</answer>}. The template allows up to five assistant turns and is identical across SIOP, TTRL, EMPO, and GRPO so that differences in end-task performance are not attributable to prompt engineering.

\phantomsection\label{app:semantic_clustering}
\noindent\textbf{Semantic clustering.} The framework is agnostic to the clustering backend: in structured settings, answers can be normalized by rules or regular expressions before clustering, and in open-ended settings a small judge model or a pairwise entailment model suffices, consistent with semantic-entropy style approaches by Kuhn et al.\ \cite{kuhn2023semantic} and Farquhar et al.\ \cite{farquhar_detecting_2024}. In this paper we instantiate clustering with \texttt{microsoft/deberta-v2-xlarge-mnli} as a bidirectional-entailment judge. For each query's $K$ rollouts we contextualize every answer $y^{(k)}$ with the query as $\text{``}q\;y^{(k)}\text{''}$ before NLI scoring, so equivalence judgments depend on the question rather than surface form alone. Two contextualized answers $a$ and $b$ are merged when $a{\Rightarrow}b$ and $b{\Rightarrow}a$ under a strict bidirectional test: the NLI head must output the entailment class in \emph{both} directions. We then run a one-pass greedy clustering over the $K$ answers (Algorithm~\ref{alg:cluster}): each unassigned answer opens a new cluster, is batch-compared against all still-unassigned later answers, and the ones that bidirectionally entail it are absorbed. The first answer entering a cluster serves as its canonical reference for multi-reference scoring. The same NLI model also produces the reliability signal $r(c,q)$ from retrieved context. It is loaded as a CPU-side singleton so it never contends with the vLLM rollout backend for GPU memory, and clustering is invoked through the same scoring service that computes $\tilde p_\theta$.

\begin{algorithm}[t]
\caption{Greedy bidirectional-entailment clustering}
\label{alg:cluster}
\begin{algorithmic}[1]
\Require answers $\{y^{(k)}\}_{k=1}^K$, query $q$, NLI judge $f_{\mathrm{NLI}}$
\State $\hat y^{(k)} \leftarrow \text{``}q\;y^{(k)}\text{''}$ for $k=1,\dots,K$ \Comment{contextualize with query}
\State $\mathrm{cid}[k] \leftarrow \bot$ for all $k$
\State $c \leftarrow 0$
\For{$k = 1, \dots, K$}
    \If{$\mathrm{cid}[k] \neq \bot$} \State \textbf{continue} \EndIf
    \State $\mathrm{cid}[k] \leftarrow c$
    \State Mark $y^{(k)}$ as the canonical reference for cluster $c$
    \State $\mathcal{U} \leftarrow \{j : j > k,\; \mathrm{cid}[j] = \bot\}$
    \State $e \leftarrow f_{\mathrm{NLI}}(\hat y^{(k)}, \hat y^{(j)})$ and $e' \leftarrow f_{\mathrm{NLI}}(\hat y^{(j)}, \hat y^{(k)})$ for all $j \in \mathcal{U}$ \Comment{batched}
    \For{each $j \in \mathcal{U}$ with $e_j = \text{entail}$ and $e'_j = \text{entail}$}
        \State $\mathrm{cid}[j] \leftarrow c$
    \EndFor
    \State $c \leftarrow c + 1$
\EndFor
\State \Return $\mathrm{cid}$, cluster canonical references
\end{algorithmic}
\end{algorithm}

\begin{table}[t]
    \centering
    \caption{Key hyperparameters used for SIOP training. Values apply to all ablations and baselines unless otherwise noted.}
    \begin{tabular}{l|l}
        \toprule
        \textbf{Hyperparameter} & \textbf{Value} \\
        \midrule
        Policy backbone & Qwen3-4B/8B \\
        Training Batch Size & 128 \\
        Validation Batch Size & 128 \\
        Max Prompt Length & 1024 \\
        Max Response Length & 2048 \\
        PPO Mini-Batch Size & 32 \\
        PPO Micro-Batch Size per GPU & 4 \\
        Actor Learning Rate & $1 \times 10^{-6}$ \\
        KL Loss Coefficient & 0.005 \\
        KL Loss Estimator & low\_var\_kl \\
        Discount Factor $\gamma$ & 1.0 \\
        Training Steps & 300 \\
        Save / Validation Frequency & every 50 steps \\
        Rollouts per Query $K$ & 4 \\
        Rollout Temperature & 1.0 \\
        Max Assistant Turns (Training) & 5 \\
        Max Tool Response Length & 2048 \\
        Retrieval Top-$k$ & 3 \\
        Tensor Parallel Size (rollout) & 4 \\
        Rollout GPU Memory Utilization & 0.4 \\
        Scorer GPU Memory Utilization & 0.15 \\
        Scorer Max Model Length & 8192 \\
        SIOP Process-Terminal Mix $\lambda$ & 0.5 \\
        SIOP References per Cluster $N$ & 3 \\
        SIOP Reliability Strength $\eta$ & 1.0 \\
        NLI Strict Entailment & true \\
        NLI Device & CPU \\
        GPUs per Node & 8 \\
        \bottomrule
    \end{tabular}
    \label{tab:training_hyperparameters}
\end{table}

\begin{figure}[t]
    \centering
    \begin{tcolorbox}[
        colback=c1!5,
        colframe=c1!60,
        title=\textbf{Task Prompt Template},
        fonttitle=\color{white},
        boxrule=1pt,
        arc=4pt,
        left=6pt,
        right=6pt,
        top=6pt,
        bottom=6pt,
        enhanced,
        shadow={2pt}{-2pt}{0pt}{gray!30}
    ]
    \footnotesize
    \textbf{System:} You are a helpful and harmless assistant.

    \medskip
    \textbf{User:} Answer the given question. You must conduct reasoning inside \texttt{<think>} and \texttt{</think>} first every time you get new information. After reasoning, if you find you don't know the exact answer, you can just call a search engine by \texttt{<tool\_call>} query \texttt{</tool\_call>} (for example, suppose you don't know where the capital of china is, you can just output \texttt{<tool\_call>capital china</tool\_call>} to get a search) and it will return the top searched results between \texttt{<tool\_response>} and \texttt{</tool\_response>}. You can search as many times as you want. If you find no further external knowledge needed, you can directly provide the answer inside \texttt{<answer>} and \texttt{</answer>}, without detailed illustrations. For example, \texttt{<answer>} Beijing \texttt{</answer>}. Question: \{question\}.
    \end{tcolorbox}
    \caption{Task prompt template used for training and inference. All rollouts see an identical system message and a user message that combines the retrieval-augmented reasoning instructions with the query.}
    \label{fig:system_prompt}
    \vspace{-1em}
\end{figure}

\section{Training Algorithm}
\label{app:algorithm}

Algorithm~\ref{alg:siop} summarizes the full training pipeline. Each iteration processes one batch of queries in four phases. The \emph{rollout and clustering} phase samples $K$ full rollouts per query, clusters their final answers into semantic outcome modes via the NLI judge, and computes the reliability-calibrated target distribution $q_\theta(c\mid q)$ over those modes. The \emph{scoring} phase runs a single forward pass on a frozen copy of the policy to obtain $\log p_\theta(y\mid\tau_{\le t})$ for every $(y, \tau_{\le t})$ pair, aggregates them into the multi-reference cluster support $\tilde p_\theta(c\mid\tau_{\le t})$, and turns consecutive differences into per-turn process rewards. The \emph{advantage construction} phase adds terminal outcome mass at the final answer turn, pools the augmented rewards across the query group, computes the group-relative $z$-score, and propagates it backward through the discounted turn-level advantage. The \emph{policy update} phase broadcasts the turn-level advantage onto the trainable tokens within each turn and applies the GRPO-style clipped objective. The scoring pass is served by a GPU-side inference service sharing the rollout backend's memory budget, while the NLI calls run through a CPU-side singleton. Thus the main additional GPU cost over outcome-only RL is the forward pass over pseudo-response pairs at each training step.

\begin{algorithm}[t]
\caption{Training with SIOP}
\label{alg:siop}
\begin{algorithmic}[1]
\Require Query batch $\mathcal{Q}$, policy $\pi_\theta$, frozen scoring copy, rollout count $K$, discount $\gamma$, mixture weight $\lambda$
    \For{each query $q \in \mathcal{Q}$}
        \State \textcolor{gray}{// Phase 1: Rollout and semantic outcome mode construction}
        \State Sample $K$ complete rollouts $\{\tau^{(k)}\}_{k=1}^K$
        \State Extract final answers $\{y^{(k)}\}_{k=1}^K$ and cluster them into semantic outcome modes $\mathcal{C}(q)$
        \State \textcolor{gray}{// Phase 2: Build a reliability-calibrated target over latent future states}
        \For{each semantic mode $c \in \mathcal{C}(q)$}
            \State Set $S_c=\{k:c(k)=c\}$ and empirical mass $m(c\mid q)=|S_c|/K$
            \State Select representative references $R(c)\subseteq c$ and normalized weights $\{\alpha_y^{(c)}\}_{y\in R(c)}$
            \State Estimate mode reliability $r(c,q)$ from semantic or evidence support of $R(c)$
        \EndFor
        \State Form the reliability-calibrated target over latent future states
        \Statex \hspace{\algorithmicindent} $q_\theta(c\mid q) \propto m(c\mid q)\exp(\eta r(c,q))$ for all $c\in\mathcal{C}(q)$
        \For{each rollout $\tau^{(k)}$}
            \State \textcolor{gray}{// Phase 3: Score each turn by the change in support for its realized mode}
            \For{turn $t=1,\dots,T_k$}
                \State Score each $y\in R(c(k))$ under prefixes $\tau_{\le t}^{(k)}$ and $\tau_{\le t-1}^{(k)}$ using the frozen copy
                \State \textcolor{gray}{// Multi-reference support approximates the full semantic cluster mass}
                \State Aggregate scores into multi-reference cluster supports
                \Statex \hspace{\algorithmicindent} $\tilde p_\theta(c(k)\mid \tau_{\le t}^{(k)})=\sum_{y\in R(c(k))}\alpha_y^{(c(k))}p_\theta(y\mid \tau_{\le t}^{(k)})$
                \Statex \hspace{\algorithmicindent} $\tilde p_\theta(c(k)\mid \tau_{\le t-1}^{(k)})=\sum_{y\in R(c(k))}\alpha_y^{(c(k))}p_\theta(y\mid \tau_{\le t-1}^{(k)})$
                \State Compute the outcome potentials $\Phi_\theta(\tau_{\le t}^{(k)},c(k))$ and $\Phi_\theta(\tau_{\le t-1}^{(k)},c(k))$
                \State Compute $r_{t,\mathrm{proc}}^{(k)}=\Phi_\theta(\tau_{\le t}^{(k)},c(k))-\Phi_\theta(\tau_{\le t-1}^{(k)},c(k))$
                \State Set $\bar r_t^{(k)} = \lambda r_{t,\mathrm{proc}}^{(k)}$
            \EndFor
            \State Add terminal outcome mass at the answer turn:
            $\bar r_{T_k}^{(k)} \leftarrow \bar r_{T_k}^{(k)} + (1-\lambda) r_{\mathrm{final}}^{(k)}$
        \EndFor
    \State \textcolor{gray}{// Phase 4: Convert augmented rewards into turn-level advantages}
    \State Pool all $\{\bar r_t^{(k)}\}_{k,t}$ within the query group and $z$-normalize using the group's own mean $\mu_g$ and std $\sigma_g$ to obtain $\{\tilde r_t^{(k)}\}_{k,t}$
    \For{each rollout $\tau^{(k)}$}
        \State Compute turn advantages $A_t^{(k)} = \sum_{u=t}^{T_k}\gamma^{u-t}\tilde r_u^{(k)}$
        \State Assign $A_t^{(k)}$ only to trainable policy tokens generated in turn $t$
    \EndFor
\EndFor
\State \textcolor{gray}{// Phase 5: Optimize only policy-generated tokens with turn-specific credit}
\State Update $\pi_\theta$ with the SIOP objective
\end{algorithmic}
\end{algorithm}

\section{Theoretical Characterizations and Proofs}
\label{app:proofs}

\subsection{Proof of Proposition~\ref{prop:telescoping}}
\label{app:proof_telescoping}

By definition of the outcome potential and the process reward,
\begin{equation}
\begin{aligned}
r_{t,\mathrm{proc}}^{(k)}
&=
\Phi_\theta(\tau_{\le t}^{(k)}, c(k))
-\Phi_\theta(\tau_{\le t-1}^{(k)}, c(k)) \\
&=
q_\theta(c(k) \mid q)\Bigl[
\log\!\bigl(\tilde p_\theta(c(k) \mid \tau_{\le t}^{(k)})\bigr) -
\log\!\bigl(\tilde p_\theta(c(k) \mid \tau_{\le t-1}^{(k)})\bigr)
\Bigr].
\end{aligned}
\end{equation}
Summing over turns yields a telescoping series:
\begin{equation}
\sum_{t=1}^{T_k} r_{t,\mathrm{proc}}^{(k)}
=
\sum_{t=1}^{T_k}
\Bigl[
\Phi_\theta(\tau_{\le t}^{(k)}, c(k))
-\Phi_\theta(\tau_{\le t-1}^{(k)}, c(k))
\Bigr],
\end{equation}
and all interior terms cancel. The remaining boundary terms are
\begin{equation}
\begin{aligned}
\Phi_\theta(\tau_{\le T_k}^{(k)}, c(k))
&-\Phi_\theta(\tau_{\le 0}^{(k)}, c(k)) \\
&=
q_\theta(c(k) \mid q)\Bigl[
\log\!\bigl(\tilde p_\theta(c(k) \mid \tau_{\le T_k}^{(k)})\bigr) -
\log\!\bigl(\tilde p_\theta(c(k) \mid q)\bigr)
\Bigr].
\end{aligned}
\end{equation}
Thus the cumulative reward depends on the improvement in final conditional support, while each turn's local credit is determined only by its own marginal contribution to that improvement.

\subsection{Proof of Proposition~\ref{prop:reduction}}
\label{app:proof_reduction}

\textbf{Factorization.} The factored form follows by expanding the difference of two outcome potentials in Section~\ref{sec:method}. The calibrated target-mass factor $q_\theta(c(k)\mid q)$ is query-specific and constant over turns of the same rollout, so it can be grouped outside the log-difference of the multi-reference cluster support. The remaining factor is the turn-wise change in log cluster support for the realized cluster $c(k)$. Together they give ``target mass times cluster-level support gain,'' which is the stated factorization.

\textbf{Supervised limit.} When a gold answer $y^\star$ is available and the self-induced target distribution concentrates on the gold cluster $c^\star$, the assumptions $q_\theta(c^\star \mid q)=1$, $R(c^\star)=\{y^\star\}$, and $\alpha_{y^\star}^{(c^\star)}=1$ imply $\tilde p_\theta(c^\star \mid \tau_{\le t}^{(k)}) = p_\theta(y^\star \mid \tau_{\le t}^{(k)})$. Substituting into the factored reward collapses the target mass to $1$ and the cluster support to the gold-answer likelihood, yielding
\begin{equation}
r_t^{(k)} = \log p_\theta(y^\star \mid \tau_{\le t}^{(k)}) - \log p_\theta(y^\star \mid \tau_{\le t-1}^{(k)}),
\end{equation}
which is the standard supervised log-support reward used by turn-level shaping methods.

\subsection{Bayesian View and Reliability Calibration}
\label{app:bayesian_calibration}

The self-induced target distribution can be read as a calibrated belief over semantic outcome modes obtained by a Bayesian-style update. The empirical mass $m(c\mid q)$ is the rollout-induced prior over latent future states, while the factor $\exp(u_\theta(c,q))$ occupies the algebraic position of an evidence likelihood derived from generic reliability signals. Under this view,
\begin{equation}
q_\theta(c\mid q)
\propto
\underbrace{m(c\mid q)}_{\text{rollout-induced prior}}
\underbrace{\exp(u_\theta(c,q))}_{\text{reliability likelihood proxy}},
\end{equation}
where the reliability strength is already absorbed into $u_\theta(c,q)=\eta r(c,q)$. This interpretation is not a claim that the NLI score is a calibrated likelihood. It only states that SIOP uses reliability signals in the same algebraic position as an evidence term before normalizing over semantic outcome modes.

\begin{proposition}[Reliability calibration can reverse a spurious majority]
\label{prop:calibration_reversal}
Consider two semantic outcome modes $c_s$ and $c_r$ with empirical masses $m_s>m_r>0$, where $c_s$ is the higher-frequency mode and $c_r$ is the lower-frequency mode. Let $u_\theta(c,q)=\eta r(c,q)$ with $\eta>0$. The calibrated target assigns larger mass to $c_r$ than to $c_s$ if and only if
\begin{equation}
r(c_r,q)-r(c_s,q)
>
\frac{1}{\eta}\log\frac{m_s}{m_r}.
\end{equation}
Thus a lower-frequency mode can overtake a raw majority exactly when its reliability advantage exceeds the log frequency disadvantage scaled by $1/\eta$.
\end{proposition}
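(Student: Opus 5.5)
The plan is to compare the two calibrated masses directly through their ratio, so that the shared normalizer drops out. By the definition of the self-induced target distribution, both $q_\theta(c_r\mid q)$ and $q_\theta(c_s\mid q)$ carry the same denominator $Z=\sum_{c'\in\mathcal C(q)} m(c'\mid q)\exp(u_\theta(c',q))$. This $Z$ is strictly positive because every mode in $\mathcal C(q)$ has positive empirical mass and the exponential is positive. Therefore
\begin{equation}
q_\theta(c_r\mid q) > q_\theta(c_s\mid q)
\iff
m_r \exp\bigl(\eta r(c_r,q)\bigr) > m_s \exp\bigl(\eta r(c_s,q)\bigr).
\end{equation}
Note that the presence of other modes in $\mathcal C(q)$ is irrelevant, since they only enter through $Z$.

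Next, I take logarithms of both sides. This is legitimate because both sides are strictly positive ($m_s,m_r>0$) and $\log$ is strictly increasing, so the inequality becomes the equivalent
\begin{equation}
\log m_r + \eta\, r(c_r,q) > \log m_s + \eta\, r(c_s,q).
\end{equation}
I then rearrange this to $\eta\,[r(c_r,q)-r(c_s,q)] > \log(m_s/m_r)$. Dividing by $\eta>0$ preserves the direction of the inequality and gives exactly the stated condition. Because every step is an equivalence (cancelling a positive common factor, applying a strictly monotone map, dividing by a positive constant), both directions of the ``if and only if'' follow at once.

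The interpretive remark is then read off directly. Since $m_s>m_r$, the threshold $\frac{1}{\eta}\log\frac{m_s}{m_r}$ is strictly positive, so $c_r$ needs a strictly larger reliability score to overtake $c_s$. The threshold shrinks as $\eta$ grows, and as $\eta\to 0$ it diverges, so no finite reliability advantage suffices and the raw majority is recovered.

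There is no real obstacle here; the argument is elementary. The only care needed is to make each step an equivalence rather than a one-way implication, which requires checking $Z>0$, $m_s,m_r>0$ and $\eta>0$. The strict hypothesis $\eta>0$ (rather than $\eta\ge 0$) is exactly what allows the final division.
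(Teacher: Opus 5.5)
Your proof is correct and takes the same route as the paper's: cancel the shared normalizer of $q_\theta(\cdot\mid q)$, take logarithms, and divide by $\eta>0$. Your explicit checks that $Z>0$, $m_s,m_r>0$, and $\eta>0$ make each step a genuine equivalence, which the paper leaves implicit.
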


\begin{proof}
The normalized denominator of $q_\theta(\cdot\mid q)$ is shared by both modes, so $q_\theta(c_r\mid q)>q_\theta(c_s\mid q)$ is equivalent to
\begin{equation}
m_r\exp(\eta r(c_r,q))
>
m_s\exp(\eta r(c_s,q)).
\end{equation}
Taking logarithms and rearranging gives
\begin{equation}
\eta\bigl(r(c_r,q)-r(c_s,q)\bigr)
>
\log\frac{m_s}{m_r},
\end{equation}
which is equivalent to the stated condition. If the inequality is reversed, the higher-frequency mode remains preferred. Equality gives tied calibrated mass.
\end{proof}

The right-hand side $(1/\eta)\log(m_s/m_r)$ is the frequency disadvantage of $c_r$ measured in the units of the chosen reliability score. Larger $\eta$ makes reliability evidence sharper and lowers the threshold a low-frequency mode must clear. This condition should not be read as a guarantee that every wrong majority is corrected. If $r(c,q)$ is a bounded NLI confidence, then highly imbalanced majorities may be impossible to reverse under a fixed calibration strength. The proposition instead identifies the exact regime in which reliability evidence can overcome raw self-consistency.

\section{Additional Results}
\label{app:additional_results}

\subsection{Compute Overhead}
\label{app:overhead}

We profile SIOP and GRPO under identical settings (Qwen3-4B, 8${\times}$A800-80GB, same retriever, same agent loop, $K{=}4$ rollouts, 300 optimizer steps) and report in Table~\ref{tab:compute_overhead} the two phases where the per-step wall-clock time differs materially. Rollout generation, reference-model forward passes, and the FSDP actor update are similar within measurement noise across the two methods. SIOP's overhead is concentrated in two components. The first is \emph{scoring} (average 22.4\,s per step), dominated by the pseudo-response forward pass that evaluates $\log p_\theta(y\mid \tau_{\le t})$ at every turn boundary. The second is \emph{advantage computation} (average 6.3\,s per step), where turn-level per-query-group $z$-normalization and backward discounted accumulation across turn boundaries replace the single vectorized broadcast used by GRPO. Aggregated across all 300 steps, SIOP takes 11.75 wall-clock hours on 8$\times$A800-80GB (94 GPU-hours per run), while GRPO takes 9.6 wall-clock hours (77 GPU-hours per run). SIOP therefore incurs an additional 2.1 hours and 17 GPU-hours per training run for verifier-free turn-level reward computation.
\begin{figure}[t]
    \centering
    \begin{tcolorbox}[
    colback=c1!5,
    colframe=c1!60,
    title=\textbf{Case Study: SIOP on a multi-hop query},
    fonttitle=\bfseries\color{white},
    boxrule=1pt,
    arc=4pt,
    left=6pt,
    right=6pt,
    top=6pt,
    bottom=6pt,
    enhanced,
    shadow={2pt}{-2pt}{0pt}{gray!30}
]
\footnotesize
\textbf{Query:} When is the director of film Pudhu Vazhvu's birthday? \\
\textbf{Ground-truth:} 1 March 1910 \\
\textbf{SIOP rollout:} \\
\texttt{<think>} To find the birthday of the director of ``Pudhu Vazhvu'', I first need to identify the director. Let me search for the director of this film. \texttt{</think>} \\
\texttt{<tool\_call>} director of Pudhu Vazhvu film \texttt{</tool\_call>} \\
\texttt{<tool\_response>} Doc 1 (Title: ``K. S. Ravikumar''): K. S. Ravikumar (born 30 May 1958) is an Indian film director and actor, primarily working in Tamil cinema\ldots\ Doc 3 (Title: ``Pudhu Vazhvu''): Pudhu Vazhvu is a 1957 Indian Tamil language film directed by M. K. Thyagaraja Bhagavathar. \texttt{</tool\_response>} \\
\texttt{<think>} Doc 3 confirms that ``Pudhu Vazhvu'' was directed by M. K. Thyagaraja Bhagavathar. The retrieved documents do not give his birthday, so I need a second query for that. \texttt{</think>} \\
\texttt{<tool\_call>} birthday M. K. Thyagaraja Bhagavathar \texttt{</tool\_call>} \\
\texttt{<tool\_response>} Doc 1 (Title: ``M. K. Thyagaraja Bhagavathar''): Mayavaram Krishnamurthy Thyagaraja Bhagavathar (1 March 1910 to 1 November 1959), also called M.K.T., was an Indian actor, producer and Carnatic music singer\ldots \texttt{</tool\_response>} \\
\texttt{<think>} The retrieved document states his birth date as 1 March 1910. \texttt{</think>} \\
\texttt{<answer>} 1 March 1910 \texttt{</answer>}
\end{tcolorbox}
    \caption{Case study of a SIOP rollout on a multi-hop question. The agent issues two search queries: the first to identify the director, the second to retrieve his birthday. The final answer matches the gold label.}
    \label{fig:case_study}
\end{figure}

\begin{table}[t]
\centering
\small
\caption{Per-step wall-clock time for the two phases where SIOP and GRPO differ materially (mean over the last five steps). Rollout generation, reference-model forward passes, and actor updates are similar within measurement noise and omitted.}
\label{tab:compute_overhead}
\begin{tabular}{l|ccc}
\toprule
Phase & SIOP & GRPO & $\Delta$ \\
\midrule
Scoring + NLI (per step) & 22.4\,s & 0.00\,s & +22.4\,s \\
Advantage (per step) & 6.3\,s & 0.03\,s & +6.3\,s \\
\midrule
Total (per step) & 141.0\,s & 115.0\,s & +26.0\,s (+23\%) \\
Wall-clock (300 steps, 1 node) & 11.75\,h & 9.6\,h & +2.1\,h \\
GPU-hours ($8\times$A800 per run) & 94 & 77 & +17 \\
\bottomrule
\end{tabular}
\end{table}

\subsection{Case Study}
\label{app:case_study}

\noindent\textbf{Qualitative rollout.} Figure~\ref{fig:case_study} gives a qualitative trace of a full SIOP rollout on the query \emph{``When is the director of film Pudhu Vazhvu's birthday?''}. The trajectory requires two distinct pieces of evidence: the first search identifies the director as M.\ K.\ Thyagaraja Bhagavathar, and the second search retrieves his birth date. This example illustrates the basic credit-assignment problem. The final answer confirms that the trajectory succeeds, but it does not reveal how much each retrieval step contributed. SIOP instead scores each turn by whether it increases support for the rollout's semantic outcome mode, so the bridge-entity search and the birth-date search can receive different credit.

\noindent\textbf{Multi-reference trace.} This case study illustrates the multi-reference regime on the query \emph{``When is Zeinab Jammeh's husband's birthday?''}. Unlike the previous example, the dominant semantic cluster contains two surface forms of the same date, ``25 May 1965'' and ``May 25, 1965'', and covers only part of the rollout group. A single-prototype reward would be sensitive to which wording is chosen as the reference, while SIOP averages support across both paraphrases. Figure~\ref{fig:case_study_multiref} and Table~\ref{tab:case_quant_multiref} show that this matters for credit assignment. The first retrieval identifies the bridge entity, which is useful for the reasoning path, but it does not yet provide the decisive date evidence. Since SIOP scores prefix support for the final date cluster rather than manually defined subgoals, the cluster-level reward may not be positive until the birth-date evidence appears in the second retrieval. The final answer turn aggregates support for both paraphrases. This case shows that semantic outcome modes are not just labels for final answers, but structured future states that can support robust turn-level rewards across equivalent wordings.

\begin{table}[t]
\centering
\small
\caption{Per-turn diagnostics for the case study with multi-reference in Figure~\ref{fig:case_study_multiref}. The first search returns documents about a different person, so the cluster surrogate $\tilde p_\theta$ drops sharply and yields a negative $r_1$. The second search retrieves the correct husband, partially recovering support. The final answer turn assigns high support to both reference paraphrases, and $r_3$ aggregates the multi-reference log gain plus the terminal mass $(1-\lambda)q_\theta=0.35$. Each cell of ``mean $\log p$ to ref'' lists the two references in order, $y_1=$``25 May 1965'' and $y_2=$``May 25, 1965''.}
\label{tab:case_quant_multiref}
\begin{tabular}{l|cccc}
\toprule
& $\tau_0$ (prompt) & $\tau_1$ (after search 1) & $\tau_2$ (after search 2) & $\tau_3$ (after answer) \\
\midrule
Mean $\log p$ to $y_1$, $y_2$ & $-2.05,\,-1.79$ & $-5.78,\,-3.38$ & $-5.44,\,-3.09$ & $-0.47,\,-0.00$ \\
Cluster support $\tilde p_\theta$ & $0.147$ & $0.019$ & $0.025$ & $0.810$ \\
Process reward $r_t$ & $-$ & $-1.45$ & $+0.20$ & $+2.79$ \\
\bottomrule
\end{tabular}
\vspace{-0.5em}
\end{table}

\begin{figure}[t]
    \centering
    \begin{tcolorbox}[
    colback=c1!5,
    colframe=c1!60,
    title=\textbf{\color{white}Multi-reference case study: SIOP per-turn reward with $N=2$ paraphrases},
    fonttitle=\bfseries\color{white},
    boxrule=1pt,
    arc=4pt,
    left=6pt,
    right=6pt,
    top=6pt,
    bottom=6pt,
    enhanced,
    shadow={2pt}{-2pt}{0pt}{gray!30}
]
\footnotesize
\textbf{Query:} When is Zeinab Jammeh's husband's birthday? \\
\textbf{Ground-truth:} 25 May 1965 (Yahya Jammeh). \\
\textbf{Cluster reference $R(c)$:} \{``25 May 1965'', ``May 25, 1965''\}, $\alpha_1=\alpha_2=0.5$, $N=2$, $q_\theta(c\mid q)=0.70$. \\
\textbf{SIOP rollout:} \\
\texttt{<think>} Zeinab Jammeh might be the wife of a politician in Africa. I will search for her husband's birthday. \texttt{</think>} \\
\texttt{<tool\_call>} zeinab jammeh husband birthday \texttt{</tool\_call>} \\
\texttt{<tool\_response>} Doc 1 (\textit{Ahmed Zewail}): off-topic biography of a chemist\ldots\ Doc 2 (\textit{Yahya Jammeh}): Jammeh briefly dated Tuti Faal\ldots\ Jammeh married his second wife Zeinab\ldots\ Doc 3 (\textit{Yahya Jammeh}): Suma Jammeh, on 26 March 1999\ldots\ \texttt{</tool\_response>} \\
\textcolor{red!70!black}{$\Rightarrow$ $r_1 = -1.45$ (the off-topic Ahmed Zewail document temporarily dominates the prefix and the cluster surrogate falls from $0.147$ to $0.019$).} \\
\texttt{<think>} Doc 2/3 confirm that Yahya Jammeh is Zeinab's husband. Now I need his date of birth. \texttt{</think>} \\
\texttt{<tool\_call>} Yahya Jammeh date of birth \texttt{</tool\_call>} \\
\texttt{<tool\_response>} Doc 1 (\textit{Yahya Jammeh}): Yahya Abdul-Aziz Jemus Junkung Jammeh\ldots\ born 25 May 1965\ldots\ served as the President of the Gambia\ldots\ \texttt{</tool\_response>} \\
\textcolor{green!50!black}{$\Rightarrow$ $r_2 = +0.20$ (the gold date appears in retrieval, and cluster surrogate partially recovers from $0.019$ to $0.025$, with the multi-reference average held back by the second paraphrase ``May 25, 1965'').} \\
\texttt{<think>} The document gives the date directly. \texttt{</think>} \\
\texttt{<answer>} May 25, 1965 \texttt{</answer>} \\
\textcolor{green!50!black}{$\Rightarrow$ $r_3 = +2.79$ (the answer turn raises $\tilde p_\theta$ to $0.810$ across both paraphrases and adds the terminal augmentation $(1-\lambda)q_\theta = 0.35$).}
\end{tcolorbox}
    \caption{Multi-reference SIOP case study on a multi-hop query. Two surface forms of the gold date coexist in the dominant cluster, so the surrogate $\tilde p_\theta$ averages prefix support across both paraphrases with uniform weights. The first search briefly misleads the prefix (red), the second retrieves the husband identity (small green), and the final answer assigns high support to both paraphrases (large green). Numbers come from a single optimization step of the full SIOP run.}
    \label{fig:case_study_multiref}
    \vspace{-1em}
\end{figure}

\section{Reward Hacking and Confident-but-Wrong Modes}
\label{app:reward_hacking}

In verifier-free optimization, self-generated targets can reward an answer mode simply because many rollouts converge to it. This creates a reward-hacking path for confident-but-wrong modes: semantic agreement may reflect repeated model bias rather than evidence-grounded correctness. SIOP addresses this by separating cluster frequency from reliability before either signal enters the reward. In the factorized structure of Proposition~\ref{prop:reduction}, the target distribution $q_\theta(c \mid q)$ is not a direct function of cluster frequency. It combines empirical mode mass $m(c \mid q)$ with a reliability correction $u_\theta(c,q)$ driven by evidence support. Proposition~\ref{prop:calibration_reversal} makes this behavior explicit: a lower-frequency but evidence-consistent cluster can receive larger target mass whenever its reliability advantage is large enough to overcome the log frequency gap. Thus raw majority support alone is not enough to dominate the target distribution.

The same separation carries into turn-level credit assignment. The outcome potential weights prefix support by the calibrated target mass of the rollout's semantic outcome mode, so turns that increase support for reliable modes receive stronger credit than turns that only make a high-frequency mode more predictable. Cluster-level averaging across multiple rollouts and references further prevents a single wording choice from dominating the target.

\section{Limitations and Future Work}
\label{app:limitations}

\noindent \textbf{Limitations.} The experiments focus on search-augmented QA with Qwen3-4B and Qwen3-8B, a setting where semantic answer modes and retrieved evidence provide a controlled testbed for turn-level credit assignment. This scope keeps the evaluation focused, while leaving broader agentic environments to future study. Since SIOP uses generic semantic clustering and evidence-support models, its signal is most natural when final outcomes can be grouped by meaning and retrieved evidence can support or contrast those meanings. Settings with highly ambiguous outcomes, weak evidence, or fine-grained tool states may require adapting the outcome-state construction.

\noindent \textbf{Future work.} Future work could extend semantic outcome potentials in several directions. First, the same abstraction may apply beyond search-QA to agentic settings where trajectories end in semantically diverse outcomes, such as tool use, web navigation, and interactive code repair. Second, outcome states need not be limited to text-answer clusters. They could be defined over structured tool traces, multi-modal observations, or hierarchical semantic modes, allowing the reward to track richer forms of future progress. Third, the potential-based view opens the door to adaptive credit granularity, where the training signal is assigned not only at the turn level, but also to retrieved evidence, tool actions, or finer action spans when they are the main source of progress.
\newpage

\end{document}